\newif\ifdrafting
\definecolor{darkgreen}{RGB}{0,127,0}
\definecolor{darkblue}{RGB}{0,0,127}
\definecolor{darkred}{RGB}{127,0,0}
\definecolor{darkmagenta}{RGB}{127,0,127}
\definecolor{darkcyan}{RGB}{0,127,127}
\newcommand{\AM} [1] {\textcolor{darkblue}{[AM: #1]}} 
\newcommand{\YM}[1] {\textcolor{darkgreen}{[YM: #1]}} 
\newcommand{\Todo} [1] {\textcolor{darkmagenta}{\bf [Todo: #1]}}
\newcommand{\AM} [1] {}
\newcommand{\YM} [1] {}
\newcommand{\Todo} [1] {}
\DeclareMathSymbol{\Gamma}{\mathalpha}{operators}{0}
\begin{document}

\title{Statistics of the Distance Traveled until Connectivity\\ for Unmanned Vehicles
\thanks{A small part of this paper has appeared in \cite{muralidharan2017first}, in which we characterized the statistics of the distance traveled until connectivity, only for straight-line paths.
In contrast, in this paper, we have extensively extended this analysis and developed new mathematical tools to tackle more general paths.
Moreover, for the case where we consider multipath in Section \ref{sec:with_multipath}, we have proposed a new methodology to compute the statistics, which is significantly more efficient.
Finally, on the numerical results side, we have extensive validation of our theoretical analysis with numerical results, based on real channel parameters.
This work is supported in part by NSF CCSS award 1611254.}
}

\author{Arjun Muralidharan         \and
        Yasamin Mostofi
}

\institute{Arjun Muralidharan \at
              \email{arjunm@ece.ucsb.edu}
           \and
           Y. Mostofi \at
              \email{ymostofi@ece.ucsb.edu}
            \and              
           Department of Electrical and Computer Engineering,\\
              University of California Santa Barbara,\\
              Santa Barbara, CA 93106, USA
}

\date{Received: date / Accepted: date}

\maketitle

\begin{abstract}
In this paper, we consider a scenario where a robot needs to establish connectivity with a remote operator or another robot, as it moves along a path.
We are interested in answering the following question: what is the distance traveled by the robot along the path before it finds a connected spot?
More specifically, we are interested in characterizing the statistics of the distance traveled along the path before it gets connected, in realistic channel environments experiencing path loss, shadowing and multipath effects.
We develop an exact mathematical analysis of these statistics for straight-line paths and also mathematically characterize a more general space of paths (beyond straight paths) for which the analysis holds, based on the properties of the path such as its curvature.
Finally, we confirm our theoretical analysis using extensive numerical results with real channel parameters from downtown San Francisco.

\keywords{First passage distance \and Connectivity \and Mobile robots \and Gauss-Markov process \and Realistic communication}
\end{abstract}

\section{Introduction}\label{sec:intro}
There has been considerable research on a team of unmanned vehicles carrying out a wide range of tasks such as search and rescue, surveillance, agriculture, and  environment monitoring (\cite{YanMostofiTCSN14, tokekar2016sensor}).
Communication between such a team of robots and a remote operator or within the robotic network itself, is often crucial for the successful completion of these tasks.
For instance, consider a scenario where a robot has collected information about its environment and needs to transmit this information to a remote operator or another robot.
In order to do so, it first needs to establish a connection with the remote operator or the other robot.
The robot may not be able to do so at its current location and may need to move to establish a connection, exploiting the spatial variations of the channel quality.
This paper then answers the following question: what are the statistics of the distance traveled along a given path until connectivity?

There has been considerable recent interest in the area of connectivity in robotic systems.
For instance, in \cite{zavlanos2011graph}, the connectivity of a network is maximized using a graph-theoretic analysis while in \cite{yan2012robotic}, connectivity is optimized using a more realistic channel model.
There has also been work on path planning to enable connectivity (\cite{muralidharan2017path, sergio2017RCAMP, zeng2017energy, chatzipanagiotis2016distributed, muralidharan2017energy, yan2012robotic}) as well as on communication-aware sensing (\cite{YanMostofiTCSN14}).

However, a mathematical characterization of the statistics of the distance traveled until connectivity is lacking in the literature, which is the main motivation for this paper.
We refer to this problem as \emph{the first passage distance (FPD) problem}, analogous to the concept of first passage time \linebreak (\cite{siegert1951first}).
We next summarize the contributions of the paper.

\textbf{Statement of contributions:}
We mathematically characterize the probability density function (PDF) of the FPD as a function of the underlying channel parameters of the environment, such as shadowing, path loss, and multipath fading parameters.
We do so for two cases: 1) when ignoring the multipath component (which could be of interest when the robot looks for an area of good connectivity as opposed to a single spot, or when multipath is negligible), and
2) when considering the multipath component.
In both cases, we first develop an exact characterization of the statistics of the FPD for the setting with straight paths.
We utilize tools from the stochastic equation literature to characterize the FPD while ignoring the multipath component,
and develop a recursive characterization for the case when we include multipath.
We then mathematically characterize a more general space of paths for which the analysis holds, based on properties of the path such as its curvature.

Note that the PDF of the FPD can be directly computed via a high dimensional integration, as we will discuss in Section \ref{subsec:straight_path_with_mp}.
However, this direct computation is infeasible for moderate distances.
Our proposed theoretical framework is not only computationally efficient but 
also brings a foundational analytical understanding to the FPD and can significantly affect networked robotic operation design.

The paper is organized as follows.
In Section \ref{sec:problem_setup}, we formally introduce the problem and briefly summarize the channel's underlying dynamics.
In Section \ref{sec:without_multipath}, we characterize the statistics of the distance traveled until connectivity while ignoring the multipath component.
In Section \ref{sec:with_multipath}, we characterize the statistics of the FPD while including the effect of multipath in the analysis.
Finally, in Section \ref{sec:numerical_results}, we validate our mathematical characterizations through extensive simulation with real channel parameters from downtown San Francisco.

\section{Problem Setup}\label{sec:problem_setup}
Consider a robot traveling along a given trajectory that needs to get connected to either a remote operator or another robot, as shown in Fig. \ref{fig:setup_general_straight}a.
In order for the robot to successfully connect with the remote operator, the receptions need to satisfy a Quality of Service (QoS) requirement such as a target Bit Error Rate, which in turn results in a minimum required received Signal to Noise Ratio, or equivalently a minimum required channel power, given a fixed transmission power.
We denote this minimum required received channel power as $\gamma_{\text{th}}$ in this paper.
This paper then asks the following question: \textbf{\emph{What is the distance traveled by the robot along the path before it gets connected to the remote operator?}}
More specifically, we are interested in mathematically characterizing the probability density function (PDF) of this distance, for a given path, as a function of the underlying channel parameters, such as path loss, shadowing and multipath fading parameters, as well as the parameters of the path, such as its curvature.

\begin{figure}
    \centering
    \includegraphics[width=0.95\linewidth]{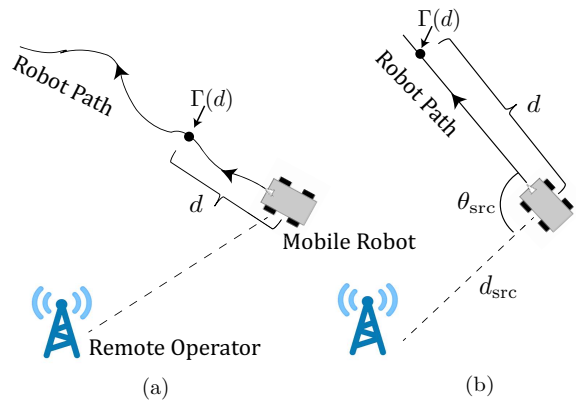}
    \caption{\small An example of the considered scenario for (a) a general path and (b) a straight path.}
    \label{fig:setup_general_straight}
\end{figure}

\subsection{Channel Model}\label{subsec:channel_model}

In the communication literature, channel power is well modeled as a multi-scale random process with three major dynamics: path loss, shadowing and multipath fading (\cite{rappaport1996wireless}).
Let $\Gamma(q)$ represent the received channel power (in the dB domain) at location $q \in \mathbb{R}^{2}$ with the remote operator located at the origin.
$\Gamma(q)$ can then be expressed as $\Gamma(q) = \gamma_{\text{PL}}(q) + \Gamma_{\text{SH}}(q) + \Gamma_{\text{MP}}(q)$ where $\gamma_{\text{PL}}(q) = K_{\text{dB}} - 10n_{\text{PL}}\log_{10}\|q\|$ is the distance-dependent path loss with $n_{\text{PL}}$ representing the path loss exponent, and $\Gamma_{\text{SH}}$ and $\Gamma_{\text{MP}}$ are random variables denoting the impact of shadowing and multipath respectively (in dB).
The multipath component or small-scale fading represents fluctuations in the channel power in the order of a wavelength, while the shadowing component or large-scale fading represents fluctuations of the channel power after the signal is locally averaged over multipath, thus reflecting the impact of larger objects such as blocking buildings.
$\Gamma_{\text{SH}}(q)$ is best modeled as a Gaussian random process with an exponential spatial correlation, i.e.,
$\mathbb{E}\left\{\Gamma_{\text{SH}}(q_1)\Gamma_{\text{SH}}(q_2)\right\} =  \sigma_{\text{SH}}^2 e^{-\|q_1-q_2\|/\beta_{\text{SH}}}$ where $\sigma_{\text{SH}}^2$ is the shadowing power and $\beta_{\text{SH}}$ is the decorrelation distance \linebreak(\cite{rappaport1996wireless}).
As for multipath, a number of distributions such as Nakagami, Rician and lognormal have been found to be a good fit (in the linear domain) (\cite{rappaport1996wireless, hashemi1994study}).

Consider the case where the robot is traveling along a path.
Let $d$ be the distance traveled by the robot along this path.
With a slight abuse of notation, in the rest of the paper we let $\Gamma(d)$ represent the channel power when the robot has traveled distance $d$ along the path, as marked in Fig. \ref{fig:setup_general_straight}a.
We thus have $\Gamma(d) = \gamma_{\text{PL}}(d) + \Gamma_{\text{SH}}(d) + \Gamma_{\text{MP}}(d)$.

\section{Characterizing the FPD Without Considering Multipath}\label{sec:without_multipath}

We start our analysis by ignoring the multipath and only considering the shadowing and path loss components of the channel, i.e., we want $\Gamma(d) = \gamma_{\text{PL}}(d) + \Gamma_{\text{SH}}(d)$ to be above $\gamma_{\text{th}}$.
This assumption allows us to better analyze and understand the FPD, and paves the way towards our most general characterization of the next section, which includes multipath as well.
Moreover, the analysis also has practical values of its own, and would be relevant to the case where the robot is interested in finding a general area of good connectivity as opposed to a single good spot.
In this section, we will characterize the statistics of the distance traveled until connectivity for this scenario.
We begin by analyzing straight paths in Section \ref{subsec:straight_path_without_mp}, where we utilize the stochastic differential equation literature (\cite{gardiner2009stochastic}) in our characterization.
We then extend our analysis to a more general space of paths in Section \ref{subsec:approx_markov_paths_without_mp}.

\subsection{Straight Paths: Stochastic Differential Equation Analysis}\label{subsec:straight_path_without_mp}

In this section, we characterize the PDF of the distance traveled until connectivity for straight-line paths.
Consider a robot situated at a distance $d_{\text{src}}$ from a remote operator or from another robot to which it needs to be connected, and moving in the direction specified by the angle $\theta_{\text{src}}$, as shown in Fig. \ref{fig:setup_general_straight}b.
The angle $\theta_{\text{src}}$ is measured clockwise with respect to the line segment connecting the remote operator and the robot, as can be seen in Fig. \ref{fig:setup_general_straight}b, and denotes the direction of travel chosen by the robot.

$\Gamma(d)$ represents the channel power when the robot is at distance $d$ along direction $\theta_{\text{src}}$, as marked in Fig. \ref{fig:setup_general_straight}b.
We thus have $\Gamma(d) = \gamma_{\text{PL}}(d) + \Gamma_{\text{SH}}(d)$, where
\begin{align}\label{eq:path_loss_comp}
\gamma_{\text{PL}}(d) = K_{\text{dB}} - 5n_{\text{PL}}\log_{10}(d_{\text{src}}^2 + d^2 -  2d_{\text{src}}d\cos\theta_{\text{src}}),
\end{align}
and $\Gamma_{\text{SH}}(d)$ is a zero mean Gaussian process with the spatial correlation of $\mathbb{E}\left\{\Gamma_{\text{SH}}(l)\Gamma_{\text{SH}}(d)\right\} = \sigma_{\text{SH}}^2 e^{-(d-l)/\beta_{\text{SH}}}$, \linebreak with $d \geq l$.
Note that $\Gamma(d)$ is also a function of $d_{\text{src}}$ and $\theta_{\text{src}}$.
We drop $\Gamma(d)$'s dependency on them in the notation as the analysis of the paper is carried out for a fixed $d_{\text{src}}$ and $\theta_{\text{src}}$.

As we shall see, $\Gamma_{\text{SH}}(d)$ becomes an Ornstein-Uhlenbeck process, one of the most studied types of Gauss-Markov processes (\cite{ricciardi1988first}; \cite{ricciardi1979ornstein}, \cite{leblanc1998path, gardiner2009stochastic}).
\linebreak Ornstein-Uhlenbeck process appears in many practical scenarios, such as Brownian motion, financial stock markets, or neuronal firing (\cite{ricciardi1979ornstein}; \linebreak \cite{leblanc1998path}), and thus has been heavily studied in the literature.
In this paper, we shall utilize this rich literature (\cite{gardiner2009stochastic}; \cite{di2001computational}) to mathematically characterize the FPD to connectivity for a mobile robot.

We begin by summarizing the definitions of a Gaussian process and a Markov process.
\begin{definition}[Gaussian Process] (\cite{dudley2002real})
A stochastic process $\{X(t): t \in T\}$, where $T$ is an index set, is a Gaussian process, if any finite number of samples have a joint Gaussian distribution, i.e., $(X(t_1), X(t_2), \cdots, X(t_k))$ is a Gaussian random vector for all $t_1,\cdots,t_k \in T$ and for all $k$.
\end{definition}
A Gaussian process is completely specified by its mean function $\mu(t) = \mathbb{E}[X(t)]$ and its covariance function $C(s,t) = \mathbb{E}\left\{[X(s)-\mu(s)][X(t)-\mu(t)]\right\}$.
We use the notation $X\sim \mathcal{GP}\left(\mu,C\right)$ to denote the underlying process.

\begin{definition}[Markov Process](\cite{papoulis2002probability})
A process $X(t)$ is Markov if 
\begin{align*}
&\mathrm{Pr}\left(X(t_n)\leq x_n|X(t_{n-1}),\cdots,X(t_1)\right) = \\
&\;\;\;\;\;\;\;\;\;\;\;\;\;\;\;\;\;\;\;\;\;\;\;\;\;\;\;\;\;\;\;\;\;\;\;\;\;\;\;\;\;\;\;\;\;\;  \mathrm{Pr}\left(X(t_n)\leq x_n|X(t_{n-1})\right),
\end{align*}
for all $n$ and for all $t_n \geq t_{n-1} \geq \cdots \geq t_1$, where $\mathrm{Pr}(.)$ denotes the probability of the argument.
\end{definition}

\begin{definition}[Gauss-Markov Process](\cite{mehr1965certain})
A stochastic process is Gauss-Markov if it satisfies the requirements of both a Gaussian process and a Markov process.
\end{definition}

We next state a lemma that shows when a Gaussian process is also Markov, which we shall utilize to prove that the channel shadowing power $\Gamma_{\text{SH}}(d)$ is Gauss-Markov.
\begin{lemma}\label{lemma:gauss_markov_cond}
A Gaussian process $X \sim \mathcal{GP}(\mu,C)$ is Markov if and only if
$C(s,u) = C(s,t)C(t,u)/C(t,t)$,
for all $u\geq t \geq s$.
\end{lemma}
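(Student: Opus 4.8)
The plan is to prove both directions using the fact that, for jointly Gaussian random variables, conditional expectations are linear and conditional distributions are Gaussian with covariance that does not depend on the conditioning values. The key object is the conditional distribution of $X(u)$ given $(X(t),X(s))$ for $u \geq t \geq s$; the Markov property is equivalent to this distribution depending on $X(s)$ only through $X(t)$, and since everything is Gaussian it suffices to check that the coefficient multiplying $X(s)$ in the conditional mean vanishes.

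For the forward direction, I would first reduce to the zero-mean case by subtracting $\mu$, which changes neither the Markov property nor the stated covariance identity. Then, writing $\mathbb{E}[X(u)\mid X(t),X(s)] = aX(t) + bX(s)$, I would solve the $2\times 2$ normal equations for $a,b$ in terms of the covariances $C(s,t),C(s,u),C(t,u)$ and the variances $C(s,s),C(t,t)$. The Markov property forces $b=0$; setting the expression for $b$ (a ratio whose numerator is $C(s,u)C(t,t) - C(s,t)C(t,u)$) equal to zero yields exactly $C(s,u) = C(s,t)C(t,u)/C(t,t)$. For the converse, I would run the same computation in reverse: assuming the covariance identity, substitute it into the normal equations to confirm $b=0$, so $\mathbb{E}[X(u)\mid X(t),X(s)] = \mathbb{E}[X(u)\mid X(t)]$; then, because the conditional covariance of a Gaussian vector is deterministic (independent of the conditioning values), the entire conditional law of $X(u)$ given $(X(t),X(s))$ coincides with that given $X(t)$ alone, which is the Markov property restricted to triples. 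Finally I would note that for Gaussian processes the three-point Markov condition extends to the general $n$-point condition in the definition, since conditioning a jointly Gaussian vector on $(X(t_{n-1}),\dots,X(t_1))$ again reduces — by the same linear-regression argument and the ordering $t_n \geq \cdots \geq t_1$ — to conditioning on $X(t_{n-1})$ alone once the pairwise identity is known to propagate.

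The main obstacle I anticipate is the last point: carefully arguing that the pairwise/triple covariance identity implies the full multivariate Markov property, rather than just the three-variable version. This requires showing that, under the identity $C(t_i,t_k) = C(t_i,t_j)C(t_j,t_k)/C(t_j,t_j)$ for $t_i \le t_j \le t_k$, the regression of $X(t_n)$ on the whole past collapses to a regression on $X(t_{n-1})$ — equivalently, that $\mathrm{Cov}(X(t_n) - cX(t_{n-1}), X(t_i)) = 0$ for all $i < n$ with $c = C(t_{n-1},t_n)/C(t_{n-1},t_{n-1})$, which follows by substituting the identity twice. I would handle degenerate cases ($C(t,t)=0$) separately or assume the process is non-degenerate, as is implicit in the applications in this paper. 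The rest is routine linear algebra with Gaussian conditionals, so I would not belabor it.
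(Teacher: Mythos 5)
Your proposal is correct, but note that the paper does not actually prove this lemma: it simply cites \cite{doob1949heuristic}, so any self-contained argument is by construction a different route. What you give is the standard linear-regression proof of Doob's characterization, and it is sound: for the ``only if'' direction, the Markov property applied to the triple $s\le t\le u$ forces the coefficient of $X(s)$ in the Gaussian regression $\mathbb{E}[X(u)\mid X(t),X(s)]=aX(t)+bX(s)$ to vanish, and $b=0$ is equivalent to $C(s,u)C(t,t)-C(s,t)C(t,u)=0$; for the ``if'' direction, your key observation that $\mathrm{Cov}\bigl(X(t_n)-cX(t_{n-1}),\,X(t_i)\bigr)=0$ for all $i<n$ with $c=C(t_{n-1},t_n)/C(t_{n-1},t_{n-1})$ (a single substitution of the identity suffices here, not two) makes $X(t_n)-cX(t_{n-1})$ independent of the entire past by joint Gaussianity, which yields the full $n$-point Markov property rather than just the three-point version --- this is exactly the step most often glossed over, and you handle it correctly. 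The one caveat you rightly flag is non-degeneracy: the identity itself presupposes $C(t,t)\neq 0$, and the uniqueness of the regression coefficients requires the covariance matrix of $(X(t),X(s))$ to be invertible; both hold for the shadowing process $\Gamma_{\text{SH}}$ used in the paper (where $\sigma_{\text{SH}}>0$ and the exponential correlation is strictly less than $1$ for distinct points). In short, your approach buys a self-contained, elementary proof where the paper buys brevity by deferring to the classical reference; the two are compatible, and your argument fills in precisely what the citation leaves implicit.
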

\begin{proof}
See \cite{doob1949heuristic} for the proof.
\end{proof}

\begin{corollary}
The channel shadowing power $\Gamma_{\text{SH}}(d)$ and the channel power $\Gamma(d)$ are Gauss-Markov processes.
\end{corollary}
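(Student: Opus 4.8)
The plan is to handle $\Gamma_{\text{SH}}(d)$ first and then deduce the claim for $\Gamma(d)$ by noting that the two processes differ only by a deterministic function. For $\Gamma_{\text{SH}}$, Gaussianity holds by definition, since $\Gamma_{\text{SH}}$ is specified to be a zero-mean Gaussian process; so by the definition of a Gauss--Markov process it remains to establish the Markov property, for which I would invoke Lemma~\ref{lemma:gauss_markov_cond}. Concretely, since the mean is zero, the covariance function coincides with the given correlation, $C(l,d) = \mathbb{E}\{\Gamma_{\text{SH}}(l)\Gamma_{\text{SH}}(d)\} = \sigma_{\text{SH}}^2 e^{-(d-l)/\beta_{\text{SH}}}$ for $d \ge l$, and one checks directly that for all $u \ge t \ge s$,
$C(s,t)C(t,u)/C(t,t) = \sigma_{\text{SH}}^2\, e^{-(t-s)/\beta_{\text{SH}}}\, e^{-(u-t)/\beta_{\text{SH}}} = \sigma_{\text{SH}}^2 e^{-(u-s)/\beta_{\text{SH}}} = C(s,u)$,
where the quotient is well defined because $C(t,t) = \sigma_{\text{SH}}^2 > 0$. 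This is where the exponential form of the shadowing correlation does all the work; I expect no genuine obstacle here, the only point requiring care being consistency with the ordering convention $u \ge t \ge s$ (equivalently $d \ge l$) under which the correlation was written.

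For $\Gamma(d) = \gamma_{\text{PL}}(d) + \Gamma_{\text{SH}}(d)$, observe that $\gamma_{\text{PL}}(d)$ in \eqref{eq:path_loss_comp} is a deterministic function of $d$ for fixed $d_{\text{src}}$ and $\theta_{\text{src}}$. Adding a deterministic function to a Gaussian process yields a Gaussian process with the same covariance function and a shifted mean, so $\Gamma(d) \sim \mathcal{GP}(\gamma_{\text{PL}}, C)$ with the same $C$ as above, which gives Gaussianity. For the Markov property I would either argue directly from the definition of a Markov process --- the events $\{\Gamma(d_i) \le x_i\}$ and $\{\Gamma_{\text{SH}}(d_i) \le x_i - \gamma_{\text{PL}}(d_i)\}$ coincide, so the $\sigma$-algebra generated by $\{\Gamma(d_1),\dots,\Gamma(d_n)\}$ equals that generated by $\{\Gamma_{\text{SH}}(d_1),\dots,\Gamma_{\text{SH}}(d_n)\}$, and the conditional laws inherit the one-step dependence --- or, more economically and in keeping with the flavor of this section, simply reapply Lemma~\ref{lemma:gauss_markov_cond}, which is phrased entirely in terms of the (shift-invariant) covariance function $C$.

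The load-bearing step is thus the one-line covariance factorization for $\Gamma_{\text{SH}}$; the only point that warrants explicit justification is the preservation of the Markov property under the deterministic path-loss shift, which I would dispatch via the equality of the generated $\sigma$-algebras or a second invocation of Lemma~\ref{lemma:gauss_markov_cond}. I would close by noting, as the surrounding text previews, that this is precisely the Ornstein--Uhlenbeck structure (constant variance $\sigma_{\text{SH}}^2$, exponentially decaying correlation) that enables the first-passage-time machinery used in the remainder of the section.
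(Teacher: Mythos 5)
Your proposal is correct and follows essentially the same route as the paper: verify the covariance factorization condition of Lemma~\ref{lemma:gauss_markov_cond} for the exponential shadowing covariance, then observe that $\Gamma(d)$ is $\Gamma_{\text{SH}}(d)$ plus the deterministic mean function $\gamma_{\text{PL}}(d)$, so it is Gauss--Markov with the same covariance. Your extra justification for why the deterministic shift preserves Markovianity (equality of generated $\sigma$-algebras, or reapplying the lemma since its condition involves only the covariance) merely fills in a step the paper leaves implicit.
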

\begin{proof}
$\Gamma_{\text{SH}} \sim \mathcal{GP}(0, C_{\Gamma_{\text{SH}}})$ is a Gaussian process with zero mean and covariance $C_{\Gamma_{\text{SH}}}(s,u)=  \sigma_{\text{SH}}^{2} e^{-(u-s)/\beta_{\text{SH}}}$.
This covariance function satisfies $C_{\Gamma_{\text{SH}}}(s,t)C_{\Gamma_{\text{SH}}}(t,u)/C_{\Gamma_{\text{SH}}}(t,t) = \sigma_{\text{SH}}^2e^{-(u-t)-(t-s)/\beta_{\text{SH}}}= C_{\Gamma_{\text{SH}}}(s,u)$, for $u \geq t \geq s$, which concludes the proof for $\Gamma_{\text{SH}}(d)$ using Lemma \ref{lemma:gauss_markov_cond}.
The channel power $\Gamma(d)$ is the sum of $\Gamma_{\text{SH}}(d)$ and a mean function (path loss function $\gamma_{\text{PL}}(d)$).
Thus, the channel power is also a Gauss-Markov process with distribution $\Gamma \sim \mathcal{GP}(\gamma_{\text{PL}}, C_{\Gamma_{\text{SH}}})$.
\end{proof}

\begin{remark}[see \cite{gardiner2009stochastic}]
The Ornstein-Uhlenbeck process $O \sim \mathcal{GP}(0,C_{O})$ is a Gauss-Markov process with the covariance function $C_{O}(s,u) = \sigma^{2} e^{-(u-s)/\beta}$, where $\sigma \geq 0$ and $\beta\geq0$ are constants.
Thus, we can see that $\Gamma_{\text{SH}}(d)$ is an Ornstein-Uhlenbeck process.
\end{remark}

In order to gain more insight into the stochastic process $\Gamma(d)$, we next discuss the transition PDF $f(\gamma, d|\eta, l) =  \frac{\partial}{\partial \gamma} \mathrm{Pr}\left(\Gamma(d) < \gamma|\Gamma(l) = \eta\right)$, where $d \geq l$, as well as the stochastic differential equation governing $\Gamma(d)$, both of which we shall subsequently use in our characterization of the PDF of the FPD.

\subsubsection{The Underlying Stochastic Differential Equation}
The transition PDF $f(\gamma, d|\eta, l)$ characterizes the distribution of $\Gamma(d)$ given $\Gamma(l) = \eta$.
This is a normal density characterized by a mean and variance of (see 10.5 of \cite{kay1993fundamentals})
\begin{align}\label{eq:mean_var_trans_pdf}
\mathbb{E}\left[\Gamma(d)|\Gamma(l) = \eta\right] &= \gamma_{\text{PL}}(d) + e^{-(d-l)/\beta_{\text{SH}}}(\eta - \gamma_{\text{PL}}(l)) \nonumber\\ 
\text{Var}\left[\Gamma(d)|\Gamma(l) = \eta\right] & = \sigma_{\text{SH}}^2(1 - e^{-2(d-l)/\beta_{\text{SH}}}). 
\end{align}
The transition PDF explicitly shows the spatial dependence of the channel power $\Gamma(d)$. 
As stated in \cite{di2001computational}, $f(\gamma, d|\eta, l)$ satisfies the partial differential equation known as the forward Fokker-Planck equation:\footnote{The Fokker-Planck equation of \cite{di2001computational} is stated for a general Gauss-Markov process. 
Here we adapted it for our specific Gauss-Markov process $\Gamma(d)$.} 
\begin{align}\label{eq:fokker_planck}
\frac{\partial}{\partial d}f(\gamma, d|\eta, l) &= -\frac{\partial}{\partial \gamma}\left[A(\gamma,d)f(\gamma, d|\eta, l)\right] \nonumber\\ 
& \;\;\;\;\;\;\;\;\;\;\;\;\;\;\;\;\; + \frac{1}{2}\frac{\partial^2}{\partial \gamma^2}\left[Bf(\gamma,d|\eta,l)\right], 
\end{align}
with the associated initial condition of $f(\gamma, l|\eta, l) = \delta(\gamma - \eta)$, 
where $A(\gamma,d) = \gamma_{\text{PL}}'(d) - \left(\gamma - \gamma_{\text{PL}}(d)\right)/\beta_{\text{SH}}$, 
$B = (2\sigma_{\text{SH}}^2)/\beta_{\text{SH}}$ and $\gamma_{\text{PL}}(d)$ is as stated in (\ref{eq:path_loss_comp}), with its derivative:
\begin{align*}
\gamma_{\text{PL}}'(d) = -10n_{\text{PL}}\log_{10}(e)\frac{d - d_{\text{src}}\cos\theta_{\text{src}}}{d_{\text{src}}^2 + d^2 -  2d_{\text{src}}d\cos\theta_{\text{src}}}.
\end{align*}
The Fokker-Planck equation shows the evolution of the probability density $f(\gamma, d|\eta, l)$ with the traveled distance $d$ given $\Gamma(l)=\eta$.

Moreover, as shown in \cite{gardiner2009stochastic}, the channel power $\Gamma(d)$ can be represented as a stochastic differential equation:\footnote{\cite{gardiner2009stochastic} provides the stochastic differential equation for the Ornstein-Uhlenbeck process, from which we can easily obtain (\ref{eq:sde_ou}).}
\begin{align}\label{eq:sde_ou}
\mathrm{d}\Gamma(d) & = A(\Gamma,d)\mathrm{d}d + \sqrt{B}\mathrm{d}W(d),
\end{align}
where $W(d)$ is the Wiener process and $A(\gamma, d)$ and $B$ are as defined before.
\begin{remark}\label{remark:drift_diffusion}
In (\ref{eq:fokker_planck}) and (\ref{eq:sde_ou}), $A(\gamma, d)$ and $B$ are known as the drift and the diffusion components respectively.
The drift $A(\gamma,d) = \gamma_{\text{PL}}'(d) - \left(\gamma - \gamma_{\text{PL}}(d)\right)/\beta_{\text{SH}}$ is a pull towards the mean, and the diffusion component $B = (2\sigma^{2}_{\text{SH}})/\beta_{\text{SH}}$ is a function of the shadowing variance and the decorrelation distance.
Then, in an increment $\Delta d$, we can think of the channel power spatially evolving  with a deterministic rate $A(\gamma,d)$, in addition to a random Gaussian term with the variance $B \Delta d$.
\end{remark}

Next, we utilize our established lemmas to derive the PDF of the FPD.

\subsubsection{First Passage Distance}
Consider the random variable $\mathcal{D}_{\gamma_0} = \inf_{d\geq 0}\{d:\Gamma(d) \geq \gamma_{\text{th}}|\Gamma(0)=\gamma_0<\gamma_{\text{th}}\}$.
This denotes the FPD of the process $\Gamma(d)$ to the connectivity threshold $\gamma_{\text{th}}$, with the initial value $\Gamma(0) = \gamma_0 < \gamma_{\text{th}}$.
Further, let
$g[d|\gamma_0] = \frac{\partial}{\partial d}\mathrm{Pr}\left(\mathcal{D}_{\gamma_0} < d\right)$
represent the PDF of the FPD.
In the following theorem, we characterize this PDF.

\begin{theorem}\label{theorem:fpd}
The PDF of FPD $g[d|\gamma_0]$ satisfies the following non-singular second-kind Volterra integral equation:
\begin{align}\label{eq:fpd_pdf}
g[d|\gamma_0] = -2\Psi[d|\gamma_0,0] + 2\int_{0}^{d}g[l|\gamma_0]\Psi[d|\gamma_{\text{th}},l]\mathrm{d}l,
\end{align}
where $\gamma_0 < \gamma_{\text{th}}$ and
\begin{align}\label{eq:psi_volterra}
\Psi[d|\eta, l] = \Bigg\{-\frac{1}{2}\frac{\mathrm{d}\gamma_{\text{PL}}(d)}{\mathrm{d}d} - \frac{\gamma_{\text{th}} - \gamma_{\text{PL}}(d)}{2\beta_{\text{SH}}} \frac{1+e^{-2(d-l)/\beta_{\text{SH}}}}{1-e^{-2(d-l)/\beta_{\text{SH}}}} \nonumber\\ 
+ \frac{\eta - \gamma_{\text{PL}}(l)}{\beta_{\text{SH}}} \frac{e^{-(d-l)/\beta_{\text{SH}}}}{1-e^{-2(d-l)/\beta_{\text{SH}}}} \Bigg\} f(\gamma_{\text{th}},d|\eta,l). 
\end{align}
\end{theorem}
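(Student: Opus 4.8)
The plan is to combine the first-passage (renewal) decomposition of the transition density $f$ with the Fokker--Planck equation (\ref{eq:fokker_planck}), and then regularize the resulting kernel, following the integral-equation methodology for first-passage-time densities (cf.\ \cite{di2001computational}). First I would record the \emph{renewal identity}: since $\Gamma(d)$ is a diffusion with a.s.\ continuous sample paths and the strong Markov property, a trajectory started at $\Gamma(0)=\gamma_0<\gamma_{\text{th}}$ that sits at a level $\gamma\ge\gamma_{\text{th}}$ at distance $d$ must first have hit $\gamma_{\text{th}}$ at some distance $l\le d$; conditioning on $\mathcal{D}_{\gamma_0}=l$ (so $\Gamma(l)=\gamma_{\text{th}}$ by continuity) and using the Markov property gives
\[
f(\gamma,d\,|\,\gamma_0,0)=\int_{0}^{d} g[l|\gamma_0]\, f(\gamma,d\,|\,\gamma_{\text{th}},l)\,\mathrm{d}l,\qquad \gamma\ge\gamma_{\text{th}},
\]
and, by dominated convergence, this extends continuously down to $\gamma=\gamma_{\text{th}}$ (a weakly singular first-kind Volterra equation that I reuse below).

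To obtain a \emph{second-kind} equation, I would integrate the renewal identity over $\gamma\in[\gamma_{\text{th}},\infty)$, getting $\mathrm{Pr}(\Gamma(d)\ge\gamma_{\text{th}}|\Gamma(0)=\gamma_0)=\int_0^d g[l|\gamma_0]\,Q(d,l)\,\mathrm{d}l$ with $Q(d,l):=\mathrm{Pr}(\Gamma(d)\ge\gamma_{\text{th}}|\Gamma(l)=\gamma_{\text{th}})$, and then differentiate in $d$. By (\ref{eq:mean_var_trans_pdf}) the conditional mean tends to $\gamma_{\text{th}}$ and the conditional variance to $0$ as $l\uparrow d$, so $Q(d,l)\to\tfrac12$ and the Leibniz boundary term is $\tfrac12\,g[d|\gamma_0]$; each remaining $d$-derivative of a probability $\mathrm{Pr}(\Gamma(d)\ge\gamma_{\text{th}}|\Gamma(l)=\eta)$ can be rewritten, via (\ref{eq:fokker_planck}), as the boundary flux $-A(\gamma_{\text{th}},d)f(\gamma_{\text{th}},d|\eta,l)+\tfrac{B}{2}\,\partial_\gamma f(\gamma,d|\eta,l)|_{\gamma=\gamma_{\text{th}}}$. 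Writing $\widetilde\Psi[d|\eta,l]$ for this flux, this step yields the same second-kind Volterra equation as (\ref{eq:fpd_pdf}) but with $\widetilde\Psi$ in place of $\Psi$; its kernel $\widetilde\Psi[d|\gamma_{\text{th}},l]$ is, however, still $\sim(d-l)^{-1/2}$ as $l\uparrow d$.

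The last step is \emph{regularization and identification}. Adding $A(\gamma_{\text{th}},d)$ times the (identically zero) boundary renewal identity of the first step to that equation replaces its kernel $\widetilde\Psi[d|\eta,l]$ by $\widetilde\Psi[d|\eta,l]+\tfrac12 A(\gamma_{\text{th}},d)f(\gamma_{\text{th}},d|\eta,l)$, i.e.\ by $-\tfrac12 A(\gamma_{\text{th}},d)f(\gamma_{\text{th}},d|\eta,l)+\tfrac{B}{2}\partial_\gamma f(\gamma,d|\eta,l)|_{\gamma=\gamma_{\text{th}}}$; substituting the explicit Gaussian $f$ with mean $m$ and variance $v$ from (\ref{eq:mean_var_trans_pdf}) (so $\partial_\gamma f|_{\gamma_{\text{th}}}=-\tfrac{\gamma_{\text{th}}-m}{v}f$), together with $A(\gamma_{\text{th}},d)=\gamma_{\text{PL}}'(d)-(\gamma_{\text{th}}-\gamma_{\text{PL}}(d))/\beta_{\text{SH}}$ and $B=2\sigma_{\text{SH}}^2/\beta_{\text{SH}}$, and simplifying, recovers exactly the $\Psi$ of (\ref{eq:psi_volterra}). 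Finally I would verify non-singularity: from (\ref{eq:mean_var_trans_pdf}), as $l\uparrow d$ one has $\gamma_{\text{th}}-m\sim-(d-l)A(\gamma_{\text{th}},d)$ and $v\sim B(d-l)$, so the two $\sim(d-l)^{-1/2}$ contributions in $-\tfrac12 A(\gamma_{\text{th}},d)f$ and in $\tfrac{B}{2}\partial_\gamma f|_{\gamma_{\text{th}}}$ cancel at leading order, leaving $\Psi[d|\gamma_{\text{th}},l]$ bounded; with the forcing term $-2\Psi[d|\gamma_0,0]$ also continuous, (\ref{eq:fpd_pdf}) is a genuine non-singular second-kind Volterra equation.

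I expect the main obstacle to be rigor rather than computation: establishing the renewal identity cleanly (continuity and the strong Markov property at $\mathcal{D}_{\gamma_0}$, including the case $\mathrm{Pr}(\mathcal{D}_{\gamma_0}=\infty)>0$), justifying differentiation under the integral sign when the integrand is only weakly singular in $l$ near $l=d$, and pinning down the limit $Q(d^+,d)=\tfrac12$; after that, the leading-order cancellation making $\Psi$ bounded, and the match with (\ref{eq:psi_volterra}), are short Taylor-expansion and algebra checks.
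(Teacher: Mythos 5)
Your proposal is correct and follows essentially the same route as the paper: the paper's proof simply defers to Theorem 3.1 of \cite{di2001computational}, and the argument behind that result is precisely your renewal (Fortet-type) decomposition, differentiation via the Fokker--Planck boundary flux, and regularization by adding a multiple of the first-kind boundary identity. Your algebraic identification of $\Psi$ (via $\partial_\gamma f|_{\gamma=\gamma_{\text{th}}}=-\tfrac{\gamma_{\text{th}}-m}{v}f$ with the mean and variance of (\ref{eq:mean_var_trans_pdf})) and the leading-order cancellation that renders the kernel non-singular as $l\uparrow d$ both check out.
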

\begin{proof}
The proof is based on the fact that $\Gamma(d)$ is a Gauss-Markov process and utilizes the Fokker-Planck equation (\ref{eq:fokker_planck}).
The details are then adapted from Theorem 3.1 of \cite{di2001computational} to our particular Gauss-Markov process.
\end{proof}

$\mathcal{D}_{\gamma_0}$ represents the FPD for a given initial value of $\Gamma(0) = \gamma_0$.
In many scenarios, we are instead interested in characterizing the FPD for the initial state $\Gamma(0)$ being a random variable bounded from above by $\gamma_{\text{th}}$, i.e., we are interested in characterizing the FPD when the starting position is not connected.
This is known as the upcrossing FPD in the general first passage literature (\cite{di2001computational}).
We next extend our analysis to derive the PDF of the upcrossing FPD.
Let the random variable $\mathcal{D}_{\Gamma_0}^{(\epsilon)} = \inf_{d\geq 0}\{d:\Gamma(d) \geq \gamma_{\text{th}}|\Gamma(0)<\gamma_{\text{th}}-\epsilon\}$ denote the $\epsilon$-upcrossing FPD of $\Gamma(d)$ to the boundary $\gamma_{\text{th}}$ given that the initial state satisfies $\Gamma(0) < \gamma_{\text{th}}-\epsilon$, where $\epsilon >0$ is a fixed real number.
The $\epsilon$-upcrossing FPD, $\mathcal{D}_{\Gamma_0}^{(\epsilon)}$, can be characterized as follows:
\begin{align*}
\mathrm{Pr}\left(\mathcal{D}_{\Gamma_0}^{(\epsilon)} < d\right) = \int_{-\infty}^{\gamma_{\text{th}}-\epsilon}\mathrm{Pr}\left(\mathcal{D}_{\gamma_0} < d\right)\zeta_{\epsilon}(\gamma_0)\mathrm{d}\gamma_0,
\end{align*}
where $\mathcal{D}_{\gamma_0}$ is the FPD given the initial value $\Gamma(0) = \gamma_0<\gamma_{\text{th}}$, as defined earlier, and 
\begin{align*}
\zeta_{\epsilon}(\gamma_0) = \left\{\begin{array}{ll}
\frac{f(\gamma_0, 0)}{\mathrm{Pr}(\Gamma(0) < \gamma_{\text{th}} - \epsilon)}, & \gamma_0 < \gamma_{\text{th}} - \epsilon \\
0, & \gamma_0 \geq \gamma_{\text{th}} - \epsilon
\end{array}\right.,
\end{align*}
is the PDF of $\Gamma(0)|\Gamma(0)<\gamma_{\text{th}}-\epsilon$ with $f(\gamma,d)$ denoting the PDF of $\Gamma(d)$.
Moreover, the $\epsilon$-upcrossing FPD density $g_{u}^{(\epsilon)}[d] = \frac{\partial}{\partial d} \mathrm{P}(\mathcal{D}_{\Gamma_0}^{(\epsilon)} < d)$ is similarly related to the FPD density $g[d|\gamma_0]$ as follows:
$g_{u}^{(\epsilon)}[d] = \int_{-\infty}^{\gamma_{\text{th}}-\epsilon}g[d|\gamma_0]\zeta_{\epsilon}(\gamma_0)\mathrm{d}\gamma_0$.

\begin{remark}
Note that we have required $\epsilon>0$.
This is due to the fact that the mathematical tools we shall utilize are not well-defined for $\gamma_0 = \gamma_{\text{th}}$.
However, $\epsilon$ can be chosen arbitrarily small.
\end{remark}

In the following theorem, we derive an expression for $g_{u}^{(\epsilon)}[d]$, the PDF of the $\epsilon$-upcrossing FPD.
\begin{theorem}\label{theorem:upcrossing_fpd}
The PDF of the $\epsilon$-upcrossing FPD, $g_{u}^{(\epsilon)}[d]$, satisfies the following non-singular second-kind Volterra integral equation:
\begin{align}\label{eq:ufpd_pdf}
g_{u}^{(\epsilon)}[d] = -2\Psi_{u}^{(\epsilon)}[d] + 2\int_{0}^{d}g_{u}^{(\epsilon)}[l]\Psi[d|\gamma_{\text{th}},l]\mathrm{d}l,
\end{align}
where $\Psi[d|\eta,l]$ is as defined in (\ref{eq:psi_volterra}),
\begin{align*}
\Psi_{u}^{(\epsilon)}[d] = \frac{1}{2\mathrm{Pr}(\Gamma(0)<\gamma_{\text{th}} - \epsilon)}\Bigg\{\frac{-2\sigma_{\text{SH}}^2}{\beta_{\text{SH}}}e^{-d/\beta_{\text{SH}}}f(\gamma_{\text{th}}-\epsilon, 0)\\
\times f[\gamma_{\text{th}},d|\gamma_{\text{th}}-\epsilon,0] + \frac{1}{2}f(\gamma_{\text{th}}, d)(1+\mathrm{Erf}[\Upsilon_{\epsilon}(d)])\\
\times \left(-\frac{\mathrm{d}\gamma_{\text{PL}}(d)}{\mathrm{d}d} - \frac{1}{\beta_{\text{SH}}}\left[\gamma_{\text{th}} - \gamma_{\text{PL}}(d)\right]\right)\Bigg\},
\end{align*}
with $\mathrm{Erf}(z) = \frac{2}{\sqrt{\pi}}\int_{0}^{z}e^{-t^2}\mathrm{d}t$ representing the error function, and
\begin{align*}
\Upsilon_{\epsilon}(d) = \frac{\gamma_{\text{th}} - \epsilon - \gamma_{\text{PL}}(0) - e^{-d/\beta_{\text{SH}}}\left(\gamma_{\text{th}} - \gamma_{\text{PL}}(d)\right)}{\sqrt{2\sigma_{\text{SH}}^2\left(1-e^{-2d/\beta_{\text{SH}}}\right)}}.
\end{align*}
\end{theorem}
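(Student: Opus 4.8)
The plan is to obtain Theorem~\ref{theorem:upcrossing_fpd} from Theorem~\ref{theorem:fpd} by ``averaging'' the conditional FPD density over the (random, sub-threshold) initial state, and then making the resulting forcing term $\Psi_{u}^{(\epsilon)}[d]$ explicit via a Gaussian computation. The starting point is the mixing relation established just above the theorem statement, $g_{u}^{(\epsilon)}[d] = \int_{-\infty}^{\gamma_{\text{th}}-\epsilon} g[d|\gamma_0]\,\zeta_{\epsilon}(\gamma_0)\,\mathrm{d}\gamma_0$.

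\textbf{Step 1 (reduction to a Volterra equation).} I substitute the second-kind Volterra equation (\ref{eq:fpd_pdf}) for $g[d|\gamma_0]$ into this integral, splitting $g_{u}^{(\epsilon)}[d]$ into two parts. In the double-integral part, $\Psi[d|\gamma_{\text{th}},l]$ does not depend on $\gamma_0$, so — invoking Fubini's theorem, which is legitimate because $g[\,\cdot\,|\gamma_0]\geq 0$ is a density and the kernel $\Psi[d|\gamma_{\text{th}},l]$ is non-singular (a property inherited from Theorem~\ref{theorem:fpd}) — I exchange the order of integration; the inner $\gamma_0$-integral then reproduces $g_{u}^{(\epsilon)}[l]$, yielding exactly the term $2\int_{0}^{d} g_{u}^{(\epsilon)}[l]\,\Psi[d|\gamma_{\text{th}},l]\,\mathrm{d}l$ of (\ref{eq:ufpd_pdf}). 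The remaining part is $-2\int_{-\infty}^{\gamma_{\text{th}}-\epsilon}\Psi[d|\gamma_0,0]\,\zeta_{\epsilon}(\gamma_0)\,\mathrm{d}\gamma_0$, which I define to be $-2\Psi_{u}^{(\epsilon)}[d]$. Thus the structure of (\ref{eq:ufpd_pdf}) is immediate, and the entire remaining task is to evaluate $\Psi_{u}^{(\epsilon)}[d]$ in closed form.

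\textbf{Step 2 (evaluating $\Psi_{u}^{(\epsilon)}[d]$).} Specializing (\ref{eq:psi_volterra}) to $l=0$, $\Psi[d|\gamma_0,0]$ is an affine function of $\gamma_0$ times the Gaussian transition density $f(\gamma_{\text{th}},d|\gamma_0,0)$, while $\zeta_{\epsilon}(\gamma_0)$ is proportional to the Gaussian marginal $f(\gamma_0,0)$ restricted to $(-\infty,\gamma_{\text{th}}-\epsilon)$. The key identity is the reciprocity $f(\gamma_{\text{th}},d|\gamma_0,0)\,f(\gamma_0,0) = f(\gamma_{\text{th}},d)\,\phi(\gamma_0)$, where $\phi$ is the Gaussian density of $\Gamma(0)\mid\Gamma(d)=\gamma_{\text{th}}$ (Gaussian because $\Gamma$ is Gauss-Markov) with mean $m = \gamma_{\text{PL}}(0) + e^{-d/\beta_{\text{SH}}}\big(\gamma_{\text{th}}-\gamma_{\text{PL}}(d)\big)$ and variance $v = \sigma_{\text{SH}}^{2}\big(1-e^{-2d/\beta_{\text{SH}}}\big)$, both read off from (\ref{eq:mean_var_trans_pdf}) using the symmetry of the covariance. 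Hence $\Psi_{u}^{(\epsilon)}[d] = \frac{f(\gamma_{\text{th}},d)}{\mathrm{Pr}(\Gamma(0)<\gamma_{\text{th}}-\epsilon)}\int_{-\infty}^{\gamma_{\text{th}}-\epsilon}[\text{affine in }\gamma_0]\,\phi(\gamma_0)\,\mathrm{d}\gamma_0$. I write the affine factor as a constant, plus a multiple of $(\gamma_0-m)$, plus a multiple of $(m-\gamma_{\text{PL}}(0))$: the $(\gamma_0-m)\phi(\gamma_0)$ integral is elementary and equals $-v\,\phi(\gamma_{\text{th}}-\epsilon)$, and running the reciprocity identity backwards rewrites $f(\gamma_{\text{th}},d)\,\phi(\gamma_{\text{th}}-\epsilon)$ as $f(\gamma_{\text{th}}-\epsilon,0)\,f[\gamma_{\text{th}},d|\gamma_{\text{th}}-\epsilon,0]$, which, after using $v\cdot\frac{e^{-d/\beta_{\text{SH}}}}{\beta_{\text{SH}}(1-e^{-2d/\beta_{\text{SH}}})}=\sigma_{\text{SH}}^{2}e^{-d/\beta_{\text{SH}}}/\beta_{\text{SH}}$, produces the first term inside the braces of $\Psi_{u}^{(\epsilon)}[d]$; the constant and $(m-\gamma_{\text{PL}}(0))$ pieces integrate against $\phi$ over the half-line to give $\tfrac12\big(1+\mathrm{Erf}(\Upsilon_{\epsilon}(d))\big)$ with $\Upsilon_{\epsilon}(d)=(\gamma_{\text{th}}-\epsilon-m)/\sqrt{2v}$, precisely the stated $\Upsilon_{\epsilon}(d)$. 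The only nontrivial manipulation is an algebraic cancellation: substituting $m$, the combination $-\frac{\gamma_{\text{th}}-\gamma_{\text{PL}}(d)}{2\beta_{\text{SH}}}\cdot\frac{1+e^{-2d/\beta_{\text{SH}}}}{1-e^{-2d/\beta_{\text{SH}}}} + \frac{m-\gamma_{\text{PL}}(0)}{\beta_{\text{SH}}}\cdot\frac{e^{-d/\beta_{\text{SH}}}}{1-e^{-2d/\beta_{\text{SH}}}}$ collapses to $-\frac{\gamma_{\text{th}}-\gamma_{\text{PL}}(d)}{2\beta_{\text{SH}}}$, which together with the $-\tfrac12\gamma_{\text{PL}}'(d)$ term gives the bracketed coefficient $\tfrac12\big(-\gamma_{\text{PL}}'(d) - \tfrac1{\beta_{\text{SH}}}(\gamma_{\text{th}}-\gamma_{\text{PL}}(d))\big)$ in the theorem.

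\textbf{Main obstacle.} Conceptually the argument is short; the real work lies in the bookkeeping of Step~2 — tracking the factors of $\tfrac12$, the normalizer $\mathrm{Pr}(\Gamma(0)<\gamma_{\text{th}}-\epsilon)$, and the exponentials through the Gaussian integrals, and carrying out the cancellation above so the result lands in exactly the printed form. The single analytic point needing care is the Fubini exchange in Step~1, which is justified by the non-singularity of $\Psi[d|\gamma_{\text{th}},l]$ guaranteed by Theorem~\ref{theorem:fpd}.
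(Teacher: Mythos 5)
Your proposal is correct, and the algebra checks out: the mixing step reproduces the Volterra structure of (\ref{eq:ufpd_pdf}) with $\Psi_{u}^{(\epsilon)}[d]=\int_{-\infty}^{\gamma_{\text{th}}-\epsilon}\Psi[d|\gamma_0,0]\zeta_{\epsilon}(\gamma_0)\mathrm{d}\gamma_0$; the reciprocity identity $f(\gamma_{\text{th}},d|\gamma_0,0)f(\gamma_0,0)=f(\gamma_{\text{th}},d)\phi(\gamma_0)$ with $\phi$ the $\mathcal{N}(m,v)$ law of $\Gamma(0)\mid\Gamma(d)=\gamma_{\text{th}}$ is valid by Bayes and the symmetry of the exponential covariance; the $(\gamma_0-m)$ integral indeed gives $-v\,\phi(\gamma_{\text{th}}-\epsilon)$, which after $v\cdot e^{-d/\beta_{\text{SH}}}/[\beta_{\text{SH}}(1-e^{-2d/\beta_{\text{SH}}})]=\sigma_{\text{SH}}^2e^{-d/\beta_{\text{SH}}}/\beta_{\text{SH}}$ yields the first brace term; and the coefficient collapse to $\tfrac12\bigl(-\gamma_{\text{PL}}'(d)-[\gamma_{\text{th}}-\gamma_{\text{PL}}(d)]/\beta_{\text{SH}}\bigr)$ together with $\int_{-\infty}^{\gamma_{\text{th}}-\epsilon}\phi=\tfrac12(1+\mathrm{Erf}[\Upsilon_{\epsilon}(d)])$ matches the second, including the overall $1/(2\mathrm{Pr}(\Gamma(0)<\gamma_{\text{th}}-\epsilon))$ normalization. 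The paper itself does not carry out this computation: its proof is a one-line adaptation of Theorem 5.3 of the cited reference (di Nardo et al.), which is proved there by exactly the route you take — averaging the conditional first-passage equation over the truncated initial law and evaluating the resulting forcing term by Gaussian backward conditioning. So your write-up is best viewed as a correct, self-contained reconstruction of the argument the paper delegates to a citation; the only point worth adding is that the Fubini exchange also needs $\Psi[d|\gamma_0,0]\zeta_{\epsilon}(\gamma_0)$ integrable in $\gamma_0$ (immediate here, since the affine factor times the two Gaussians is dominated by a Gaussian on $(-\infty,\gamma_{\text{th}}-\epsilon)$), not just non-singularity of the kernel $\Psi[d|\gamma_{\text{th}},l]$ in $l$.
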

\begin{proof}
The proof is obtained by adapting Theorem $5.3$ of \cite{di2001computational} to our particular Gauss-Markov process form. 
\end{proof}
In terms of implementation, the functions $\Psi[d|\eta,l]$ and $\Psi_{u}^{\epsilon}[d]$ in Theorem \ref{theorem:fpd} and Theorem \ref{theorem:upcrossing_fpd} can be easily computed.
The PDF of the FPD ($g[d|\gamma_0]$) and the PDF of the $\epsilon$-upcrossing FPD ($g_{u}^{(\epsilon)}[d]$) can then be computed from the integral equations (\ref{eq:fpd_pdf}) and (\ref{eq:ufpd_pdf}) respectively.
In particular, Simpson rule provides the basis for an efficient iterative algorithm for evaluating these integrals (See Section 4 of \cite{di2001computational}).

\begin{remark}[Computational complexity]
The direct computation of $g_{u}^{(\epsilon)}[d]$ involves a high dimension integration, as we will discuss in Section \ref{subsec:straight_path_with_mp}.
For a discretized path of $N$ steps, this direct computation would have a computational cost exponential in $N$, i.e. $O(NM^{N})$ for some constant $M$.
In contrast, the computation cost of $g_{u}^{(\epsilon)}[d]$ using Theorem \ref{theorem:upcrossing_fpd} is $O(N^2)$.
Moreover, Theorem \ref{theorem:upcrossing_fpd} is also an elegant characterization of the $\epsilon$-upcrossing FPD that can be utilized for analysis and design of robotic operations.
\end{remark}

\subsection{Approximately-Markovian Paths}\label{subsec:approx_markov_paths_without_mp}

In this section, we characterize the space of paths (beyond straight paths) that results in approximately-Markovian processes.
As we saw in Section \ref{subsec:straight_path_without_mp}, the channel shadowing component along a straight line is a Gauss-Markov process.
This allowed us to characterize the statistics of the distance to connectivity for a mobile robot traveling along a straight path.
A general non-straight path is not Markovian since the covariance function $C_{\Gamma_{\text{SH}}}(s,u)$ does not satisfy Lemma \ref{lemma:gauss_markov_cond}.
In this section, we characterize the space of paths for which the channel shadowing power along the path is approximately a Gauss-Markov process.
This allows us to immediately apply the stochastic differential equation analysis of Section \ref{subsec:straight_path_without_mp} to characterize the statistics of the distance until connectivity for these paths.

Consider the scenario in Fig. \ref{fig:path_rolling_ball_and_discretized} (top), where we have discretized the path, with $\Gamma_{\text{SH},-0}$ denoting the shadowing power at the current location and $\Gamma_{\text{SH}, -1}, \Gamma_{\text{SH},-2}, \cdots $ indicating the channel shadowing power at previously-visited points.\footnote{Note that the discretization step size of the path must be small for the derivations of Theorem \ref{theorem:upcrossing_fpd} to be valid.}
In Section \ref{subsec:approx_markov_paths_without_mp}, we saw that a Gauss-Markov process satisfies the Fokker-Planck equation of (3), which provides us with the result of Theorem \ref{theorem:upcrossing_fpd}.
The Fokker-Planck equation in turn requires the property that $p(\gamma_{\text{SH},-0}| \gamma_{\text{SH},-1}, \gamma_{\text{SH},-2}, \cdots) = p(\gamma_{\text{SH},-0}| \gamma_{\text{SH},-1}) $ for its derivation (through the Chapman-Kolmogorov equation (\cite{gardiner2009stochastic})).
Thus, we say a path is approximately-Markovian, if at every point on the path, we have that 
$p(\gamma_{\text{SH},-0}| \gamma_{\text{SH},-1}, \gamma_{\text{SH},-2}, \cdots)$ is close to $p(\gamma_{\text{SH},-0}| \gamma_{\text{SH},-1}) $.
We will characterize this closeness precisely in Section \ref{subsubsec:approx_markov_kl_divergence} using the Kullback-Leibler (KL) divergence metric.

Our key insight is that the approximate Markovian nature is related to the curvature of a path, which is a measure of how much the path curves, i.e., how much it deviates from a straight line.
For instance, a straight line has a curvature of $0$.
Thus, we would expect that paths with small enough curvature would result in approximately-Markovian processes.
We will precisely characterize what we mean by this in Section \ref{subsubsec:curvature_constraint}.

We first describe an outline of our approach for characterizing the space of approximately-Markovian paths.
At every point on the path, instead of checking for the conditional distribution given all the past points on the path, which is cumbersome, we consider all past points on the path within a certain distance of the current point, i.e., within a ball centered at the current point.
In other words, to check the approximately-Markovian property, we evaluate \linebreak $p(\gamma_{\text{SH},-0}| \gamma_{\text{SH},-1}, \gamma_{\text{SH},-2}, \cdots, \gamma_{\text{SH},-n})$ instead of \linebreak $p(\gamma_{\text{SH},-0}| \gamma_{\text{SH},-1}, \gamma_{\text{SH},-2}, \cdots)$.
Fig. \ref{fig:path_rolling_ball_and_discretized} (top) shows an illustration of this.
This makes sense since the shadowing component has an exponential correlation function.
Thus, if the radius of the ball is large enough, the points outside of the ball will have a negligible impact on the estimate at the center of the ball.
We will characterize this radius in Section \ref{subsubsec:ball_radius_constraint}.
Thus, our strategy is to roll a ball along the path, as shown in Fig. \ref{fig:path_rolling_ball_and_discretized} (bottom), and to check if the approximate Markovian property holds at each point along the path.
We then characterize two conditions that can ensure that a path will be approximately-Markovian.
The first is that, at any point on the path, if we travel backward along the path it should not loop either within the ball or such that it re-enters the ball.
We refer to such looping as $d_{\text{th}}$-looping ($d_{\text{th}}$ being the radius of the ball), and examples of this are shown in figures \ref{fig:d_th_loop_free_balls_and_ball_characterization}a and \ref{fig:d_th_loop_free_balls_and_ball_characterization}b.
Equivalently, a path is called $d_{\text{th}}$-loop-free if there is no $d_{\text{th}}$-looping.
The second condition is that the maximum curvature of the path should be smaller than a certain bound, which will be characterized later in Section \ref{subsubsec:curvature_constraint}.
If the $d_{\text{th}}$-loop-free condition is satisfied, then the only part of the path that lies within the ball would lie in the shaded region of Fig.  \ref{fig:d_th_loop_free_balls_and_ball_characterization}c, and if the maximum curvature of the path is small enough, then the path will be approximately-Markovian.
We will formulate this precisely in Section \ref{subsubsec:curvature_constraint}.

\begin{figure}
    \centering
    \includegraphics[width=0.95\linewidth]{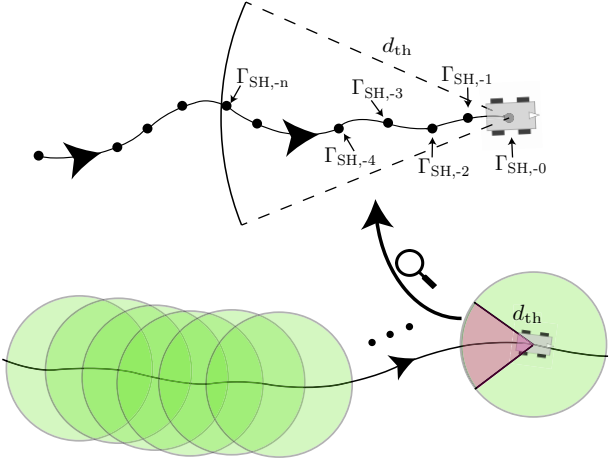}
    \caption{\small (bottom) A ball with radius $d_{\text{th}}$ rolling along the path, where we check for approximate Markovianity within each ball, and (top) the discretized path and the corresponding channel shadowing power values within a ball.}
    \label{fig:path_rolling_ball_and_discretized}
\end{figure}

We start by mathematically characterizing the $d_{\text{th}}$-looping condition in detail.

\subsubsection{$d_{\text{th}}$-Loop-Free Constraint}\label{subsubsec:loop_free_constraint}
We define $d_{\text{th}}$-loop-free paths as paths where neither of the two following scenarios occurs at any point on the path.
The first is when traveling backward along a path, the path loops within the ball itself.
More precisely, when traveling backward along the path, let the initial direction of travel be along the negative x-axis.
We say that the path loops within the ball if at any point (still inside the ball), the direction of travel has a component along the positive x-axis (e.g., Fig. \ref{fig:d_th_loop_free_balls_and_ball_characterization}a).
The second scenario is when the path re-enters the ball once it leaves it.
These two scenarios, which we collectively refer to as $d_{\text{th}}$-looping, are illustrated in figures \ref{fig:d_th_loop_free_balls_and_ball_characterization}a and \ref{fig:d_th_loop_free_balls_and_ball_characterization}b.
Such $d_{\text{th}}$-looping behavior can possibly invalidate the approximate Markovian nature of the path.

We next relate the $d_{\text{th}}$-loop-free condition to the curvature of the path.
We first review the precise definition of curvature.
\begin{definition}[Curvature](\cite{kline1998calculus})
The curvature of a planar path $r(s)=(x(s),y(s))$ parameterized by arc-length is defined as
\begin{align*}
\kappa(s) = \|T'(s)\|,
\end{align*}
where $T(s)$ is the unit tangent vector at $s$.
\end{definition}

When traveling backward along a path, consider the segment of the path inside the ball, before the path exits the ball.
Let $r_{\text{ball}}$ refer to this segment, as shown in Fig. \ref{fig:d_th_loop_free_balls_and_ball_characterization}c.
Moreover, let $d_{r_{\text{ball}}}$ refer to its length.
The following lemma characterizes some important properties of $r_{\text{ball}}$.
\begin{lemma}\label{lemma:ball_segment_length}
For a path with maximum curvature $\kappa$ and a ball with radius $d_{\text{th}}$, the path segment $r_{\text{ball}}$ satisfies the following properties:
\begin{enumerate}
\item $r_{\text{ball}}$ lies within the shaded region of Fig. \ref{fig:d_th_loop_free_balls_and_ball_characterization}c where the boundary of the region corresponds to circular arcs with curvature $\kappa$.
\item If  $\kappa < 1/d_{\text{th}}$, $r_{\text{ball}}$ cannot loop within the ball (see Fig. \ref{fig:d_th_loop_free_balls_and_ball_characterization}a for an example of looping within the ball).
\item The length of the segment $r_{\text{ball}}$ satisfies 
\begin{align*}
d_{r_
{\text{ball}}} < \frac{1}{\kappa}\sin^{-1}\left(\kappa \times d_{\text{th}}\right).
\end{align*}
\end{enumerate}
\end{lemma}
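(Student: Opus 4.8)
The plan is to set up arc-length coordinates for the backward path and reduce all three parts to a single estimate on how fast the tangent direction can rotate. Place the center of the ball (the current point) at the origin and parameterize the backward path by arc length $s$, so that $r(0)=0$ and, following the convention of Section \ref{subsubsec:loop_free_constraint}, the initial tangent is $T(0)=(-1,0)$. Let $L$ be the first arc length at which $\|r(s)\|=d_{\text{th}}$; by definition $d_{r_{\text{ball}}}=L$. Write $T(s)=(-\cos\psi(s),\sin\psi(s))$ with $\psi(0)=0$. Then the definition of curvature gives $\kappa(s)=\|T'(s)\|=|\psi'(s)|$, so $|\psi(s)|\le\int_0^s\kappa(u)\,\mathrm{d}u\le\kappa s$ for every $s$. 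This one inequality — the tangent angle can deviate from its initial value by at most $\kappa s$ — is the workhorse for everything that follows, and $x(s)=-\int_0^s\cos\psi(u)\,\mathrm{d}u$, $y(s)=\int_0^s\sin\psi(u)\,\mathrm{d}u$ are the only explicit formulas needed.

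For part 1, the two extreme trajectories are the circular arcs $r_\pm$ of radius $1/\kappa$ through the origin tangent to $T(0)$, i.e.\ with constant signed curvature $\pm\kappa$ and tangent angle $\psi_\pm(s)=\pm\kappa s$; explicitly $x_\pm(s)=-\sin(\kappa s)/\kappa$ and $y_\pm(s)=\pm(1-\cos(\kappa s))/\kappa$. Comparing the general path to $r_\pm$ at equal arc length and using $\psi(u)\in[-\kappa u,\kappa u]$ together with the monotonicity of $\cos$ on $[0,\pi]$ and of $\sin$ on $[-\pi/2,\pi/2]$, one gets $x(s)\le x_\pm(s)$ and $y_-(s)\le y(s)\le y_+(s)$ (the $y$-bounds in the range $\kappa s\le\pi/2$, which part 2 shows is the relevant one). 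Hence $r(s)$ stays between $r_+$ and $r_-$, and intersecting with the ball of radius $d_{\text{th}}$ yields exactly the shaded region of Fig.~\ref{fig:d_th_loop_free_balls_and_ball_characterization}c, whose boundary consists of arcs of curvature $\kappa$ (plus the ball boundary). I expect the fussiest point to be part 1: making rigorous that the \emph{boundary} of the reachable set is precisely these extremal circular arcs (an envelope/extremality argument of Dubins type), rather than just the pointwise arc-length comparison above; I would either cite this standard fact or append the short envelope computation.

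Parts 2 and 3 then follow quickly from the elementary inequality obtained by specializing part 1. For $\kappa u\le\pi/2$ we have $\cos\psi(u)\ge\cos(\kappa u)\ge0$, so $-x(s)=\int_0^s\cos\psi(u)\,\mathrm{d}u\ge\int_0^s\cos(\kappa u)\,\mathrm{d}u=\sin(\kappa s)/\kappa\ge0$, whence $\|r(s)\|\ge|x(s)|\ge\sin(\kappa s)/\kappa$ whenever $\kappa s\le\pi/2$. If the path were still inside the ball at $s=\pi/(2\kappa)$ this would give $d_{\text{th}}>\|r(\pi/(2\kappa))\|\ge1/\kappa$, contradicting $\kappa<1/d_{\text{th}}$; therefore $L<\pi/(2\kappa)$. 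Part 2 is now immediate: looping within the ball would require $x'(s)=-\cos\psi(s)>0$, i.e.\ $|\psi(s)|>\pi/2$, for some $s<L$, forcing $\kappa s>\pi/2$, which is impossible since $\kappa s<\kappa L<\pi/2$. For part 3, the bound $\|r(s)\|\ge\sin(\kappa s)/\kappa$ holds on all of $[0,L]$; evaluating at $s=L$ gives $d_{\text{th}}=\|r(L)\|\ge\sin(\kappa L)/\kappa$, so $\sin(\kappa L)\le\kappa d_{\text{th}}<1$, and since $\kappa L\in[0,\pi/2)$ we may invert the monotone sine to get $d_{r_{\text{ball}}}=L\le\frac1\kappa\sin^{-1}(\kappa d_{\text{th}})$. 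To obtain the strict inequality, note equality everywhere would force $|\psi(u)|=\kappa u$ a.e.\ and $y(L)=0$, i.e.\ $r_{\text{ball}}$ equals one of the extremal arcs $r_\pm$ — but those have $y(L)=\pm(1-\cos(\kappa L))/\kappa\ne0$ for $L>0$, a contradiction.
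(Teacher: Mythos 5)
Your proof is correct and follows essentially the same route as the paper's: both arguments compare the path against the extremal constant-curvature circular arcs through the ball's center to establish containment in the shaded region, non-looping, and the arc-length bound $\frac{1}{\kappa}\sin^{-1}(\kappa d_{\text{th}})$ via the observation that $\|r(s)\|\geq |x(s)|$. The only difference is one of rigor rather than substance: where the paper argues informally that the circular path ``minimizes the $x$-axis velocity the most,'' you make this precise through the tangent-angle bound $|\psi(s)|\leq \kappa s$, which also lets you justify the strict inequality in part 3 explicitly.
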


\begin{proof}
See Appendix \ref{appendix:proof_ball_segment_length} for the proof.
\end{proof}

Then, a sufficient condition for a $d_{\text{th}}$-loop-free path is given as follows.
\begin{lemma}[$d_{\text{th}}$-loop-free path] \label{lemma:dth_loop_free}
Consider a planar path \linebreak $r(s)=(x(s), y(s))$ parameterized by arc length, i.e., $s$ denotes the arc length.
Let $\kappa$ be the maximum curvature of the path.
The path is $d_{\text{th}}$-loop-free if it satisfies $\kappa < 1/d_{\text{th}}$ and 
\begin{align*}
\|r(s)-r(s-d)\| > d_{\text{th}}, 
\end{align*}
for $d> \frac{1}{\kappa}\sin^{-1}\left(\kappa d_{\text{th}}\right)$ and for all $s$.
\end{lemma}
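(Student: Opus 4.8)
The plan is to fix an arbitrary point $r(s)$ on the path, center the ball $B$ of radius $d_{\text{th}}$ at $r(s)$, and check directly that neither of the two $d_{\text{th}}$-looping scenarios of Section~\ref{subsubsec:loop_free_constraint} can occur there; since $s$ is arbitrary, this establishes $d_{\text{th}}$-loop-freeness of the whole path. Traveling backward from $r(s)$, I would orient coordinates so that the initial backward tangent points along the negative $x$-axis, and let $r_{\text{ball}}$ denote the maximal initial sub-path contained in $B$, of arc length $d_{r_{\text{ball}}}$, exactly as in Lemma~\ref{lemma:ball_segment_length} and Fig.~\ref{fig:d_th_loop_free_balls_and_ball_characterization}c. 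The proof is then essentially an arc-length bookkeeping argument layered on top of Lemma~\ref{lemma:ball_segment_length}.

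For the first scenario (the backward path loops \emph{inside} $B$), the hypothesis $\kappa < 1/d_{\text{th}}$ lets me invoke part~2 of Lemma~\ref{lemma:ball_segment_length} directly: $r_{\text{ball}}$ cannot acquire a positive $x$-component of its tangent while inside $B$, so looping-within-the-ball is impossible. For the second scenario (the path re-enters $B$ after leaving it), I would argue by contradiction. By part~3 of Lemma~\ref{lemma:ball_segment_length}, the path leaves $B$ at backward arc length $d_{r_{\text{ball}}} < \tfrac{1}{\kappa}\sin^{-1}(\kappa d_{\text{th}})$. If it re-entered $B$ at some later backward arc length $d' > d_{r_{\text{ball}}}$, the re-entry point would satisfy $\|r(s)-r(s-d')\| = d_{\text{th}}$, whereas the second hypothesis forces $\|r(s)-r(s-d)\| > d_{\text{th}}$ for every $d > \tfrac{1}{\kappa}\sin^{-1}(\kappa d_{\text{th}})$; hence necessarily $d' \le \tfrac{1}{\kappa}\sin^{-1}(\kappa d_{\text{th}})$. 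It then remains to rule out a re-entry with $d' \in \bigl(d_{r_{\text{ball}}},\,\tfrac{1}{\kappa}\sin^{-1}(\kappa d_{\text{th}})\bigr]$, and for this I would appeal to part~1 of Lemma~\ref{lemma:ball_segment_length}: a curvature-$\le\kappa$ path emanating from the center of $B$ that does not loop inside $B$ is confined to the shaded region of Fig.~\ref{fig:d_th_loop_free_balls_and_ball_characterization}c, whose bounding circular arcs of curvature $\kappa$ make any return to $\partial B$ cost at least $\tfrac{1}{\kappa}\sin^{-1}(\kappa d_{\text{th}})$ of arc length; together with the strict inequality $d_{r_{\text{ball}}} < \tfrac{1}{\kappa}\sin^{-1}(\kappa d_{\text{th}})$ this closes the window and gives the contradiction.

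The main obstacle is precisely this last step — ruling out a ``quick'' re-entry in the arc-length window between $d_{r_{\text{ball}}}$ and $\tfrac{1}{\kappa}\sin^{-1}(\kappa d_{\text{th}})$ — since the second hypothesis only constrains the path beyond that window. I expect the cleanest route is to push the geometry underlying Lemma~\ref{lemma:ball_segment_length} a little further, tracking the backward path's tangent angle (whose turning rate is at most $\kappa$) together with the distance $\rho(d) = \|r(s)-r(s-d)\|$, and showing that once $\rho$ first reaches $d_{\text{th}}$ it can only return to $d_{\text{th}}$ after the tangent has swept enough angle to require arc length exceeding $\tfrac{1}{\kappa}\sin^{-1}(\kappa d_{\text{th}})$ (invoking the no-loop-inside-$B$ property from the first scenario at the would-be re-entry configuration). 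No channel, path-loss, or shadowing content enters here: the lemma is purely a statement in planar differential geometry, so once the arc-length accounting for $r_{\text{ball}}$ and its complement is made airtight, the result follows.
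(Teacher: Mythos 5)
Your proposal follows the same route as the paper's proof: the condition $\kappa < 1/d_{\text{th}}$ together with part~2 of Lemma~\ref{lemma:ball_segment_length} eliminates the looping-inside-the-ball scenario of Fig.~\ref{fig:d_th_loop_free_balls_and_ball_characterization}a, and part~3 together with the distance hypothesis eliminates re-entry at backward arc lengths $d > \frac{1}{\kappa}\sin^{-1}(\kappa d_{\text{th}})$. The one place you go beyond the paper is the ``window'' $d \in (d_{r_{\text{ball}}}, \frac{1}{\kappa}\sin^{-1}(\kappa d_{\text{th}})]$. The paper's two-sentence proof does not discuss it at all: it observes that every $d$ beyond $\frac{1}{\kappa}\sin^{-1}(\kappa d_{\text{th}})$ indexes a point that has already left the ball and applies the hypothesis there, implicitly taking for granted that no re-entry can occur at smaller arc length. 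So the obstacle you flag is real, but it is a gap in the published argument as much as in yours; judged against the paper's own level of rigor, your sketch is complete and matches its structure.

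If you do want to close the window, the handle you propose (tracking the tangent's turning against the radial distance) is the right one, and the key quantitative fact is this: writing $\rho(d) = \|r(s-d)-r(s)\|$, a direct computation gives $\rho'' = \frac{1-(\rho')^2}{\rho} + \langle r'', \hat{r}\rangle \geq \frac{1-(\rho')^2}{\rho} - \kappa$, so at any critical point of $\rho$ with $\rho < 1/\kappa$ one has $\rho'' > 0$. Hence $\rho$ admits no local maximum below the level $1/\kappa > d_{\text{th}}$, i.e., it is non-decreasing until it first reaches $1/\kappa$; the path therefore crosses the sphere of radius $d_{\text{th}}$ outward exactly once before $\rho$ attains $1/\kappa$, and any re-entry must be preceded by an excursion up to $\rho = 1/\kappa$ and back. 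Crude arc-length accounting (reaching $\rho = 1/\kappa$ costs at least $1/\kappa$, descending back to $d_{\text{th}}$ costs at least $1/\kappa - d_{\text{th}}$) already pushes the earliest possible re-entry past $\frac{1}{\kappa}\sin^{-1}(\kappa d_{\text{th}})$ for moderate $\kappa d_{\text{th}}$, but for $\kappa d_{\text{th}}$ close to $1$ the bound $|\rho'|\le 1$ is not sharp enough and one must additionally use the differential inequality on $\rho'$ during the descent (with $w=-\rho'$ one gets $w' \le \kappa w^2$ once $\rho \le 1/\kappa$, so the path cannot pick up inward radial speed quickly). That last regime is the genuinely delicate part, and it is worth stating explicitly that neither the paper nor your sketch carries it out.
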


\begin{proof}
From Lemma \ref{lemma:ball_segment_length}, we know that if $\kappa < 1/d_{\text{th}}$, the path cannot loop within the ball, preventing the condition of Fig. \ref{fig:d_th_loop_free_balls_and_ball_characterization}a.
Moreover, from Lemma \ref{lemma:ball_segment_length}, it can easily be confirmed that 
 $\|r(s)-r(s-d)\|$ for  $d> \frac{1}{\kappa}\sin^{-1}\left(\kappa d_{\text{th}}\right)$ is the euclidean distance from the center to a point on the part of the path that has left the ball.
Thus, if $\|r(s)-r(s-d)\| > d_{\text{th}}$, for $d> \frac{1}{\kappa}\sin^{-1}\left(\kappa d_{\text{th}}\right)$ the path cannot re-enter the ball (i.e., scenario of Fig. \ref{fig:d_th_loop_free_balls_and_ball_characterization}b is not possible).
\end{proof}

\begin{remark}
Any path can be reparameterized by arc length.
Details on this can be found in \cite{eberly2008moving}.
\end{remark}

\begin{figure*}
    \centering
    \includegraphics[width=1\linewidth]{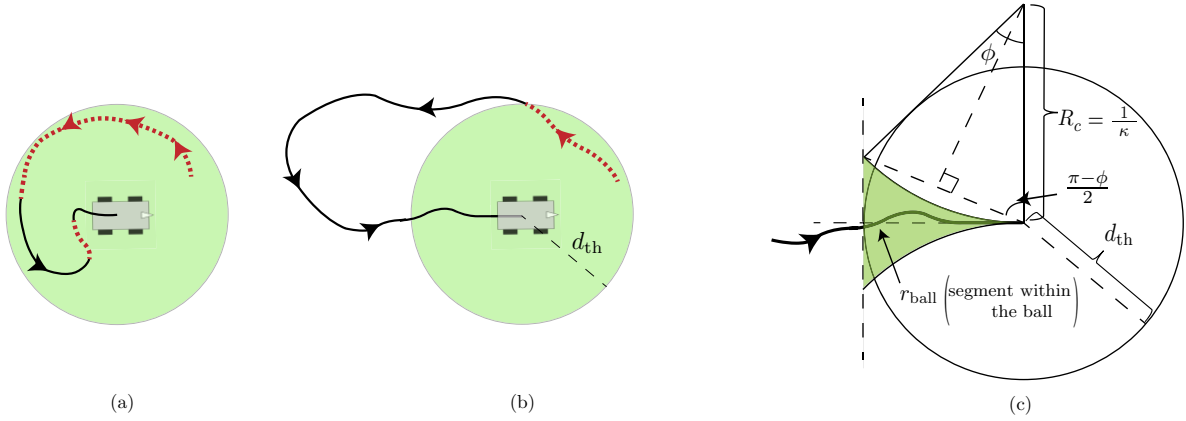}
    \caption{\small (a)-(b) Two scenarios of $d_{\text{th}}$-looping: (a) path loops within the ball and (b) path loops back to re-enter the ball. The parts causing the loop in either scenario is denoted by the dashed red line.
    (c) A path of maximum curvature $\kappa$ would lie within the shaded area.
    A sample such path is shown.}
    \label{fig:d_th_loop_free_balls_and_ball_characterization}
\end{figure*}

We next characterize the similarity or dissimilarity between the true distribution $p(\gamma_{\text{SH},-0}| \gamma_{\text{SH},-1}, \cdots, \gamma_{\text{SH},-n})$ and its Markov approximation $p(\gamma_{\text{SH},-0}| \gamma_{\text{SH},-1}) $ using the KL divergence metric.
We then utilize this to obtain sufficient conditions on the ball radius and the curvature of a path for the approximate Markovian nature to hold.

\subsubsection{Approximately-Markovian: KL Divergence metric}\label{subsubsec:approx_markov_kl_divergence} 
Consider a path as shown in Fig. \ref{fig:path_rolling_ball_and_discretized} (top).
Let $\Gamma_{\text{SH},-0}$ be the channel shadowing power on the current location and \linebreak $\Gamma_{\text{SH},-1},\cdots,\Gamma_{\text{SH},-n}$ be the channel shadowing power on the past $n$ locations along the path.
From Section \ref{subsec:channel_model}, we know that $\Gamma_{\text{SH},-0}, \cdots, \Gamma_{\text{SH},-n}$ are jointly Gaussian random variables.
The distribution of $\Gamma_{\text{SH},-0}|\Gamma_{\text{SH},-1},\cdots,\Gamma_{\text{SH},-n}$ is then given as $\mathcal{N}\left(m, \sigma^2\right)$, where 
\begin{align}
m &= \Sigma_{0,1:n}^{T} \Sigma_{1:n}^{-1} \Gamma_{\text{SH},-1:n}, \label{eq:est_mean}\\
\sigma^2 &= \sigma_{\text{SH}}^2 - \Sigma_{0,1:n}^{T} \Sigma_{1:n}^{-1}\Sigma_{0,1:n}, \label{eq:est_var}
\end{align}
with $\Gamma_{\text{SH},-1:n} = [\Gamma_{\text{SH},-1},\cdots,\Gamma_{\text{SH},-n}]^{T}$, $\Sigma_{0,1:n} = \linebreak \mathbb{E}[\Gamma_{\text{SH},-0}\Gamma_{\text{SH},-1:n}]$ and
$\Sigma_{1:n} = \mathbb{E}[\Gamma_{\text{SH},-1:n}\Gamma_{\text{SH},-1:n}^{T}]$ (see 10.5 of \cite{kay1993fundamentals}).
Moreover, $\mathbb{E}[\Gamma_{\text{SH},-i}\Gamma_{\text{SH},-j}] = \linebreak\sigma_{\text{SH}}^{2}e^{-\|q_i-q_j\|/\beta_{\text{SH}}}$, where $q_i \in \mathbb{R}^2$ is the location corresponding to $\Gamma_{\text{SH},-i}$.
Let $\alpha =  \Sigma_{1:n}^{-1} \Sigma_{0,1:n} $ denote the coefficients of the mean.
We then have $m = \alpha^{T}\Gamma_{\text{SH},-1:n} = \alpha_1\Gamma_{\text{SH},-1} + \cdots + \alpha_n\Gamma_{\text{SH},-n}$.

We want to approximate this distribution with the Markovian distribution $\Gamma_{\text{SH},-0}|\Gamma_{\text{SH},-1} \sim \mathcal{N}\left(\hat{m}, \hat{\sigma}^2\right)$ where $\hat{m} = \rho\Gamma_{\text{SH},-1}$ and $\hat{\sigma}^2 = \sigma_{\text{SH}}^2(1-\rho^2)$, with $\rho=e^{-\Delta d/\beta_{\text{SH}}}$, and $\Delta d$ being the step size of the path.
We first characterize the difference between the means, given as $\Delta m = m-\hat{m} = \Delta \alpha^{T} \Gamma_{\text{SH},-1:n}$, where $\Delta \alpha = [\alpha_1-\rho, \alpha_2, \cdots, \alpha_n]^{T}$.
$\Delta m$ is thus a zero-mean Gaussian random variable $\mathcal{N}\left(0, \sigma_{\Delta m}^{2}\right)$, where 
\begin{align}\label{eq:del_m_var}
\sigma_{\Delta m}^2 = \Delta \alpha^{T}\Sigma_{1:n}\Delta\alpha.
\end{align}

We will compare how close the true distribution and its approximation are using the KL divergence metric.
We first review the definition of KL divergence.
\begin{definition}[KL Divergence](\cite{cover2012elements})
The KL divergence between two distributions $p(x)$ and $\tilde{p}(x)$ is defined as 
\begin{align*}
KL = \int p(x)\log_{e} \frac{p(x)}{\tilde{p}(x)} dx.
\end{align*}
\end{definition}
KL divergence is a measure of the distance between two distributions (\cite{cover2012elements}).
We will utilize this as a measure of the goodness of the approximation:
the smaller the KL divergence, the better the approximation.
The following lemma gives us the expression for this KL divergence.
\begin{lemma}\label{lemma:KL_divergence}
The KL divergence between $\mathcal{N}(m, \sigma^2)$ and its approximation $\mathcal{N}(\hat{m}, \hat{\sigma}^2)$ is given as \begin{align}\label{eq:KL_divergence}
KL = \frac{\sigma_{\Delta m}^2}{2\hat{\sigma}^2}\chi_1^{2} + \frac{1}{2}\left(\frac{\sigma^2}{\hat{\sigma}^2} - 1 - \log_{e}\frac{\sigma^2}{\hat{\sigma}^2}\right),
\end{align}
where $\chi_1^{2} = (m-\hat{m})^{2}/\sigma_{\Delta m}^2$.
\end{lemma}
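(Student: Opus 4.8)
The plan is to reduce the statement to the standard closed-form expression for the Kullback--Leibler divergence between two univariate Gaussians, and then simply rewrite that expression in terms of the quantities $\sigma_{\Delta m}^2$ and $\chi_1^2$ already introduced. Note that, here, $m$ and $\hat{m}$ are themselves (random) functions of $\Gamma_{\text{SH},-1:n}$; for the purposes of this lemma they are treated as fixed real numbers, so the integral in the definition of the KL divergence is over the single dummy variable $x$ of the two normal densities $p\sim\mathcal{N}(m,\sigma^2)$ and $\tilde{p}\sim\mathcal{N}(\hat{m},\hat{\sigma}^2)$.

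First I would expand the log-density ratio,
\begin{align*}
\log_{e}\frac{p(x)}{\tilde{p}(x)} = \frac{1}{2}\log_{e}\frac{\hat{\sigma}^2}{\sigma^2} - \frac{(x-m)^2}{2\sigma^2} + \frac{(x-\hat{m})^2}{2\hat{\sigma}^2},
\end{align*}
and integrate against $p(x)$. Using $\mathbb{E}_{p}[(x-m)^2]=\sigma^2$ and the bias--variance identity $\mathbb{E}_{p}[(x-\hat{m})^2]=\sigma^2+(m-\hat{m})^2$, this yields
\begin{align*}
KL = \frac{1}{2}\log_{e}\frac{\hat{\sigma}^2}{\sigma^2} - \frac{1}{2} + \frac{\sigma^2 + (m-\hat{m})^2}{2\hat{\sigma}^2}.
\end{align*}

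Finally I would match this to the claimed form in (\ref{eq:KL_divergence}). By the definition $\chi_1^2=(m-\hat{m})^2/\sigma_{\Delta m}^2$, we have $(m-\hat{m})^2=\sigma_{\Delta m}^2\,\chi_1^2$, so $\frac{(m-\hat{m})^2}{2\hat{\sigma}^2}=\frac{\sigma_{\Delta m}^2}{2\hat{\sigma}^2}\chi_1^2$, which is the first term of (\ref{eq:KL_divergence}). The remaining terms regroup, using $\log_{e}(\hat{\sigma}^2/\sigma^2)=-\log_{e}(\sigma^2/\hat{\sigma}^2)$, as $\frac{\sigma^2}{2\hat{\sigma}^2}-\frac{1}{2}-\frac{1}{2}\log_{e}\frac{\sigma^2}{\hat{\sigma}^2}=\frac{1}{2}\left(\frac{\sigma^2}{\hat{\sigma}^2}-1-\log_{e}\frac{\sigma^2}{\hat{\sigma}^2}\right)$, which is exactly the second term, completing the proof. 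There is essentially no hard step here: the only points requiring care are the sign of the logarithmic term and the second-moment identity $\mathbb{E}_{p}[(x-\hat{m})^2]=\mathrm{Var}_{p}(x)+(\mathbb{E}_{p}[x]-\hat{m})^2$. The one remark worth making is that (\ref{eq:KL_divergence}) is itself a random quantity through $\chi_1^2$, which is what makes it natural to subsequently bound or average it over realizations of the past shadowing values.
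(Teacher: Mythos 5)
Your proof is correct and is the standard direct computation of the Gaussian--Gaussian KL divergence (expand the log-density ratio, apply the second-moment identity, regroup), which is exactly what the paper outsources to its citation of \cite{robert1996intrinsic}. The substitution $(m-\hat{m})^2=\sigma_{\Delta m}^2\chi_1^2$ and the sign handling of the logarithm are both handled correctly, so nothing is missing.
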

\begin{proof}
See \cite{robert1996intrinsic} for the proof.
\end{proof}
Since $m$ and $\hat{m}$ are functions of $\Gamma_{\text{SH},-1},\cdots,\Gamma_{\text{SH},-n}$, they are random variables.
Thus, $\chi_1^{2}$ becomes a Chi-squared random variable with one degree of freedom since $(m-\hat{m}) \sim (0, \sigma_{\Delta m}^2)$ (\cite{lancaster2005chi}),
and the KL divergence of (\ref{eq:KL_divergence}) becomes a random variable.
More specifically, from (\ref{eq:KL_divergence}), we know that the KL divergence is a scaled Chi-squared random variable with an offset term.
We use the mean $m_{KL}$ and the standard deviation $\sigma_{KL}$ of the KL divergence to capture the deviation of the Markov approximation from the true distribution.
The smaller these values are, the better the approximation is.
In our approach, we set maximum tolerable values for the mean and the standard deviation as $\epsilon_{m}$ and $\epsilon_{\sigma}$ respectively.
Then, we say that the distribution is approximately-Markovian for the parameters $\epsilon_{m}$ and $\epsilon_{\sigma}$ if we satisfy $m_{KL} \leq \epsilon_{m}$ and $\sigma_{KL} \leq \epsilon_{\sigma}$.

We next consider the setting with $3$ points in space, as shown in Fig. \ref{fig:3_points_analysis_with_curvature}a,
where we have the current point ($\Gamma_{\text{SH},-0}$), the previous point ($\Gamma_{\text{SH},-1}$) and a general point in space ($\Gamma_{\text{SH},r}$).
We are interested in mathematically characterizing the impact of $\Gamma_{\text{SH},r}$ on the estimate at the current point,
i.e., how good an approximation $\Gamma_{\text{SH},-0}|\Gamma_{\text{SH},-1} \sim \mathcal{N}(\hat{m}, \hat{\sigma}^2)$ is for the true distribution $\Gamma_{\text{SH},-0}|\Gamma_{\text{SH},-1},\Gamma_{\text{SH},r}\sim \linebreak \mathcal{N}(m, \sigma^2)$.
As we shall see, we will utilize this analysis in such a way that it serves as a good proxy for the general $n$ point analysis.
Specifically, we will utilize it to obtain bounds on the ball radius as well as on the maximum allowed curvature of a path in Section \ref{subsubsec:ball_radius_constraint} and Section \ref{subsubsec:curvature_constraint} respectively.
Let $d_1 = \|q_0-q_1\|$, $d_r = \|q_0-q_r\|$, and $d_{1r} = \|q_1-q_r\|$, as shown in Fig. \ref{fig:3_points_analysis_with_curvature}a, where $q_r$ is the location of the general point.
Moreover, $d_1 = \Delta d$.

The following lemma characterizes the mean and standard deviation of the KL divergence between the true distribution and its approximation for the $3$ point analysis.
\begin{lemma}\label{lemma:mean_std_KL_divergence}
The mean and standard deviation of the KL divergence between the true distribution $\mathcal{N}(m, \sigma^2)$ and its approximation $\mathcal{N}(\hat{m}, \hat{\sigma}^2)$ for the $3$ point analysis of Fig. \ref{fig:3_points_analysis_with_curvature}a is given as 
\begin{align*}
m_{KL} &= -\frac{1}{2}\log_{e}\left(1 - \frac{\sigma_{\Delta m}^2}{\hat{\sigma}^2}\right),\\
\sigma_{KL} &= \frac{\sigma_{\Delta m}^2}{\sqrt{2}\hat{\sigma}^2},
\end{align*}
where
\begin{align*}
\sigma_{\Delta m}^2 = \sigma_{\text{SH}}^2\frac{\left(e^{-d_r/\beta_{\text{SH}}} - e^{-(d_1+d_{1r})/\beta_{\text{SH}}}\right)^2}{1 - e^{-2d_{1r}/\beta_{\text{SH}}}}.
\end{align*} 
\end{lemma}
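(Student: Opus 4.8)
The plan is to treat the KL divergence of Lemma~\ref{lemma:KL_divergence} as a random variable whose only source of randomness is the single quantity $\chi_1^2$, and to compute its first two moments, after first establishing an algebraic identity that collapses the deterministic offset term in~(\ref{eq:KL_divergence}) into the logarithm.

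First I would prove the identity $\sigma^2 = \hat{\sigma}^2 - \sigma_{\Delta m}^2$. Writing $\alpha = \Sigma_{1:n}^{-1}\Sigma_{0,1:n}$ and letting $e_1$ be the first standard basis vector, so that $\Delta\alpha = \alpha - \rho e_1$, expand $\sigma_{\Delta m}^2 = \Delta\alpha^T\Sigma_{1:n}\Delta\alpha = \alpha^T\Sigma_{1:n}\alpha - 2\rho\,e_1^T\Sigma_{1:n}\alpha + \rho^2 e_1^T\Sigma_{1:n}e_1$. Now $e_1^T\Sigma_{1:n}\alpha = e_1^T\Sigma_{0,1:n} = \mathbb{E}[\Gamma_{\text{SH},-0}\Gamma_{\text{SH},-1}] = \sigma_{\text{SH}}^2 e^{-\Delta d/\beta_{\text{SH}}} = \sigma_{\text{SH}}^2\rho$, and $e_1^T\Sigma_{1:n}e_1 = \sigma_{\text{SH}}^2$, so the last two terms collapse to $-\rho^2\sigma_{\text{SH}}^2$, giving $\sigma_{\Delta m}^2 = \alpha^T\Sigma_{1:n}\alpha - \rho^2\sigma_{\text{SH}}^2$. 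Comparing this with $\sigma^2 = \sigma_{\text{SH}}^2 - \alpha^T\Sigma_{1:n}\alpha$ (from~(\ref{eq:est_var}), using $\Sigma_{0,1:n} = \Sigma_{1:n}\alpha$) and $\hat{\sigma}^2 = \sigma_{\text{SH}}^2 - \rho^2\sigma_{\text{SH}}^2$ yields $\sigma_{\Delta m}^2 = \hat{\sigma}^2 - \sigma^2$. I expect this to be the crux of the argument: the point is that $\rho = e^{-\Delta d/\beta_{\text{SH}}}$ is exactly the normalized correlation between the two nearest points, which is what makes $e_1^T\Sigma_{1:n}\alpha$ reduce to $\sigma_{\text{SH}}^2\rho$; everything afterwards is bookkeeping. (This identity in fact holds for general $n$, not merely the three-point case.)

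Given $\sigma^2/\hat{\sigma}^2 = 1 - \sigma_{\Delta m}^2/\hat{\sigma}^2$, substitute into~(\ref{eq:KL_divergence}) so that its deterministic offset becomes $\tfrac12\bigl(-\sigma_{\Delta m}^2/\hat{\sigma}^2 - \log_e(1-\sigma_{\Delta m}^2/\hat{\sigma}^2)\bigr)$. Taking expectations and using $\mathbb{E}[\chi_1^2] = 1$ cancels the two $\tfrac12\sigma_{\Delta m}^2/\hat{\sigma}^2$ contributions and leaves $m_{KL} = -\tfrac12\log_e(1-\sigma_{\Delta m}^2/\hat{\sigma}^2)$. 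Since the only randomness in~(\ref{eq:KL_divergence}) is the factor $\chi_1^2$, the variance is $\bigl(\sigma_{\Delta m}^2/(2\hat{\sigma}^2)\bigr)^2\,\text{Var}[\chi_1^2] = \bigl(\sigma_{\Delta m}^2/(2\hat{\sigma}^2)\bigr)^2\cdot 2$, hence $\sigma_{KL} = \sigma_{\Delta m}^2/(\sqrt{2}\,\hat{\sigma}^2)$.

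Finally I would evaluate $\sigma_{\Delta m}^2$ for the three-point configuration of Fig.~\ref{fig:3_points_analysis_with_curvature}a. Here $\Sigma_{1:n}$ is the $2\times 2$ covariance matrix of $(\Gamma_{\text{SH},-1},\Gamma_{\text{SH},r})$ with diagonal entries $\sigma_{\text{SH}}^2$ and off-diagonal entry $\sigma_{\text{SH}}^2 e^{-d_{1r}/\beta_{\text{SH}}}$, while $\Sigma_{0,1:n} = \sigma_{\text{SH}}^2\,(e^{-d_1/\beta_{\text{SH}}},\, e^{-d_r/\beta_{\text{SH}}})^T$ with $d_1 = \Delta d$. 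Inverting the $2\times 2$ matrix, forming $\alpha$, and subtracting $\rho e_1$ with $\rho = e^{-d_1/\beta_{\text{SH}}}$ shows that $\Delta\alpha$ is a scalar multiple of $(-e^{-d_{1r}/\beta_{\text{SH}}},\, 1)^T$; applying $\Sigma_{1:n}$ to this vector annihilates its first component, and a short computation then gives $\sigma_{\Delta m}^2 = \Delta\alpha^T\Sigma_{1:n}\Delta\alpha = \sigma_{\text{SH}}^2\,(e^{-d_r/\beta_{\text{SH}}} - e^{-(d_1+d_{1r})/\beta_{\text{SH}}})^2/(1 - e^{-2d_{1r}/\beta_{\text{SH}}})$, as claimed. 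This last step is a routine $2\times 2$ linear-algebra computation and presents no real difficulty.
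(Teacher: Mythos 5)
Your proposal is correct and follows essentially the same route as the paper: both reduce the KL divergence of Lemma \ref{lemma:KL_divergence} to a scaled $\chi_1^2$ plus a deterministic offset, use the identity $\sigma^2 = \hat{\sigma}^2 - \sigma_{\Delta m}^2$ to collapse that offset into the logarithm, read off the moments from $\mathbb{E}[\chi_1^2]=1$ and $\mathrm{Var}[\chi_1^2]=2$, and finish with the explicit $2\times 2$ computation of $\sigma_{\Delta m}^2$. The only difference is that you derive the identity abstractly from $\Sigma_{1:n}\alpha = \Sigma_{0,1:n}$ together with $e_1^{T}\Sigma_{0,1:n}=\sigma_{\text{SH}}^2\rho$ (which uses $d_1=\Delta d$, as the paper assumes for the three-point analysis), whereas the paper verifies $\sigma^2-\hat{\sigma}^2=-\sigma_{\Delta m}^2$ by explicitly computing both sides in the three-point case; your version is slightly cleaner and, as you note, extends to general $n$.
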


\begin{proof}
See Appendix \ref{appendix:proof_KL_divergence_3_points} for the proof.
\end{proof}

\begin{figure}
    \centering
    \includegraphics[width=0.95\linewidth]{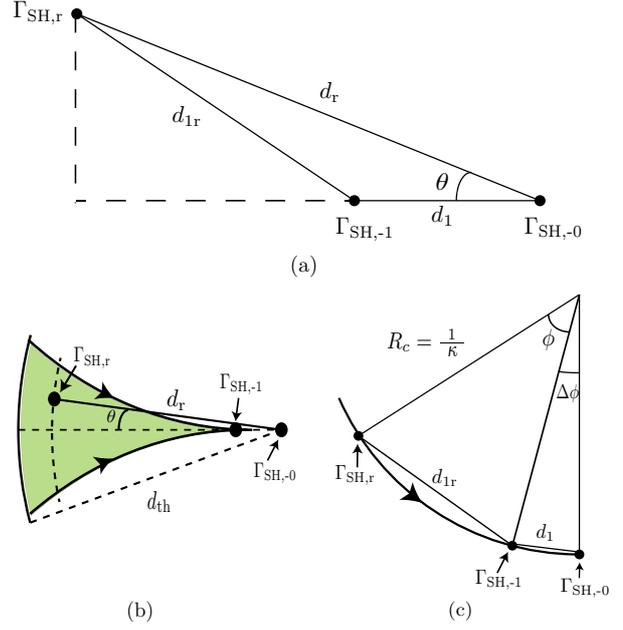}
    \caption{\small $3$ points analysis (a) for a general case, (b) for a path with maximum curvature $\kappa$ that satisfies $\kappa<1/d_{\text{th}}$, and (c) along a path with a constant curvature.}
    \label{fig:3_points_analysis_with_curvature}
\end{figure}

Any point on the path can belong to three possible regions:
$1$) the shaded region within the ball of Fig. \ref{fig:d_th_loop_free_balls_and_ball_characterization}c,
$2$) within the  ball but outside the shaded region, and
$3$) outside the ball.
If the path is $d_{\text{th}}$-loop-free, then no point of the path lies within region $2$ (i.e., within the ball but outside the shaded region).
We next characterize the minimum ball radius and the maximum allowed curvature of a path such that the impact of any point ($\Gamma_{\text{SH},r}$) in region $1$ and $3$ on the estimate at the center of the ball is negligible.

\subsubsection{Ball Radius}\label{subsubsec:ball_radius_constraint}
We next utilize our analysis to determine the ball radius $d_{\text{th}}$.
We wish to select the minimum $d_{\text{th}}$ such that the impact of any point outside the ball on the approximation is within the tolerable KL divergence parameters $\epsilon_{m}$ and $\epsilon_{\sigma}$, i.e., the KL divergence between the true and the approximating distribution (in the $3$ point analysis) satisfies $m_{KL} \leq \epsilon_{m}$ and $\sigma_{KL} \leq \epsilon_{\sigma}$.

The following lemma characterizes what the minimum ball radius $d_{\text{th}}$ should be.
\begin{lemma}\label{lemma:ball_radius}
The minimum ball radius $d_{\text{th}}$ such that any point outside the ball satisfies the maximum tolerable KL divergence parameters $\epsilon_{m}$ and $\epsilon_{\sigma}$ for the $3$ point analysis, is given by
\begin{align*}
d_{\text{th}} & = \frac{\beta_{\text{SH}}}{2} \log_e\left(\rho^2 + \frac{1 - \rho^{2}}{\epsilon_d}\right),
\end{align*}
where $\rho = e^{-\Delta d/\beta_{\text{SH}}}$ and $\epsilon_d = \min\left\{1-e^{-2\epsilon_{m}}, \sqrt{2}\epsilon_{\sigma}\right\}$.
\end{lemma}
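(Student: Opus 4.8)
The plan is to start from the closed-form expressions for $m_{KL}$ and $\sigma_{KL}$ in Lemma \ref{lemma:mean_std_KL_divergence}, impose the two tolerance constraints $m_{KL}\le\epsilon_m$ and $\sigma_{KL}\le\epsilon_\sigma$, and translate them into a single inequality on the ratio $\sigma_{\Delta m}^2/\hat\sigma^2$. Since $m_{KL}=-\tfrac12\log_e(1-\sigma_{\Delta m}^2/\hat\sigma^2)$ is monotone increasing in $\sigma_{\Delta m}^2/\hat\sigma^2$, the constraint $m_{KL}\le\epsilon_m$ is equivalent to $\sigma_{\Delta m}^2/\hat\sigma^2\le 1-e^{-2\epsilon_m}$; likewise $\sigma_{KL}=\sigma_{\Delta m}^2/(\sqrt2\,\hat\sigma^2)\le\epsilon_\sigma$ is equivalent to $\sigma_{\Delta m}^2/\hat\sigma^2\le\sqrt2\,\epsilon_\sigma$. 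Combining, the pair of tolerances holds exactly when $\sigma_{\Delta m}^2/\hat\sigma^2\le\epsilon_d$ with $\epsilon_d=\min\{1-e^{-2\epsilon_m},\sqrt2\,\epsilon_\sigma\}$, which is precisely the $\epsilon_d$ appearing in the statement.

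Next I would substitute the explicit forms $\hat\sigma^2=\sigma_{\text{SH}}^2(1-\rho^2)$ and the $\sigma_{\Delta m}^2$ from Lemma \ref{lemma:mean_std_KL_divergence}, and then specialize the geometry to the worst case. The key geometric observation is that for a point $\Gamma_{\text{SH},r}$ lying \emph{outside} the ball, the quantity that controls $\sigma_{\Delta m}^2$ is largest when that point is as close as possible, i.e. on the ball boundary at distance $d_r=d_{\text{th}}$ from the center, and positioned so that it is collinear with the current and previous points (so $d_{1r}=d_{\text{th}}-\Delta d$, making $d_r=d_1+d_{1r}$ fail to hold and instead giving the maximal numerator). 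One shows that $\sigma_{\Delta m}^2$ is monotone decreasing in $d_r$ for fixed configuration, so the supremum over region-$3$ points is attained at $d_r=d_{\text{th}}$; plugging this in and simplifying the exponentials reduces $\sigma_{\Delta m}^2/\hat\sigma^2$ to an expression of the form $\big(e^{-d_{\text{th}}/\beta_{\text{SH}}}\big)^2$-type terms over $1-\rho^2$. Setting this worst-case ratio equal to $\epsilon_d$ and solving the resulting algebraic equation for $d_{\text{th}}$ yields $e^{-2d_{\text{th}}/\beta_{\text{SH}}}=\epsilon_d(1-\rho^2)+\rho^2(\text{something})$, which on rearrangement gives $d_{\text{th}}=\tfrac{\beta_{\text{SH}}}{2}\log_e\!\big(\rho^2+\tfrac{1-\rho^2}{\epsilon_d}\big)$.

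The main obstacle I anticipate is the geometric optimization step: making rigorous the claim that the \emph{worst} point outside the ball — the one maximizing $\sigma_{\Delta m}^2$, and hence the KL divergence — sits on the boundary sphere in the collinear configuration. This requires checking that $\sigma_{\Delta m}^2(d_r,d_{1r})$, subject to the triangle-inequality constraints linking $d_1=\Delta d$, $d_r$, and $d_{1r}$, is decreasing in $d_r$ and that the extreme admissible $d_{1r}$ is the relevant one; the exponential correlation structure should make each partial-derivative sign computation tractable but slightly tedious. Once that monotonicity is pinned down, the remaining algebra — inverting the logarithm to isolate $d_{\text{th}}$ — is routine, and the final formula follows by direct substitution of $\rho=e^{-\Delta d/\beta_{\text{SH}}}$. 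I would defer the detailed monotonicity computation and the exponential bookkeeping to an appendix, citing the $3$-point setup of Fig. \ref{fig:3_points_analysis_with_curvature}a.
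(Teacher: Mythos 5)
Your first reduction is correct and matches the paper exactly: both KL constraints collapse to the single condition $\sigma_{\Delta m}^2/\hat{\sigma}^2 \leq \epsilon_d$ with $\epsilon_d = \min\{1-e^{-2\epsilon_m}, \sqrt{2}\epsilon_\sigma\}$, using the monotonicity of $m_{KL}$ and $\sigma_{KL}$ in that ratio. The problem is in your geometric worst-case step, which you identify as the main obstacle and then get backwards. You place the worst point collinearly so that $d_{1r}=d_r-\Delta d$, i.e.\ $\theta=0$ with the previous point between the current point and $\Gamma_{\text{SH},r}$. But in that configuration $d_r = d_1 + d_{1r}$ holds exactly (contrary to your parenthetical claim that it fails), so the numerator $\bigl(e^{-d_r/\beta_{\text{SH}}}-e^{-(d_1+d_{1r})/\beta_{\text{SH}}}\bigr)^2$ of $\sigma_{\Delta m}^2$ vanishes identically: this is precisely the arrangement in which the exponential correlation is Markov-exact and the previous point screens off $\Gamma_{\text{SH},r}$ completely. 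Your proposed worst case is therefore the \emph{best} case (zero KL divergence), and plugging it in yields the trivial constraint $d_{\text{th}}=0$, not the stated formula.

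The paper's proof resolves this by parameterizing $d_{1r}=\sqrt{d_1^2+d_r^2-2d_1d_r\cos\theta}\in[d_r-d_1,\,d_r+d_1]$ and showing that $h(z)=(1-e^{-(z-z_l)})^2/(1-e^{-2z})$, with $z=d_{1r}/\beta_{\text{SH}}$ and $z_l=(d_r-d_1)/\beta_{\text{SH}}$, has positive derivative whenever $d_r>d_1$. Hence $\sigma_{\Delta m}^2$ is maximized at the \emph{largest} admissible $d_{1r}=d_r+d_1$, i.e.\ $\theta=\pi$, with the general point antipodal to the previous point so that no screening occurs. Substituting that configuration gives $\sigma_{\Delta m}^2/\hat{\sigma}^2 = e^{-2d_r/\beta_{\text{SH}}}(1-\rho^2)/(1-\rho^2 e^{-2d_r/\beta_{\text{SH}}})$, which is decreasing in $d_r$ (so the binding point is on the ball boundary, as you anticipated), and setting this $\leq\epsilon_d$ and solving gives the stated $d_{\text{th}}$. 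So your overall skeleton is right, but the one computation you deferred is the one that decides the answer, and as stated your configuration would fail to produce it.
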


\begin{proof}
See Appendix \ref{appendix:proof_ball_radius} for the proof.
\end{proof}

\subsubsection{Curvature Constraint}\label{subsubsec:curvature_constraint}
We next utilize the $3$ point analysis to determine the maximum curvature of a path such that it is approximately-\linebreak Markovian, i.e., it satisfies the KL divergence constraint \linebreak $m_{KL} \leq \epsilon_{m}$ and $\sigma_{KL} \leq \epsilon_{\sigma}$.

Consider the scenario in Fig. \ref{fig:3_points_analysis_with_curvature}b.
For a given maximum curvature $\kappa$, any valid point of the path must lie within the shaded region of the figure, where the boundary corresponds to circular paths with curvature $\kappa$.
We wish to find the maximum allowed curvature such that the impact of any point within the shaded region on the approximation is within the tolerable KL divergence parameters $\epsilon_{m}$ and $\epsilon_{\sigma}$, i.e., the KL divergence between the true and the approximating distribution (in the $3$ point analysis) satisfies $m_{KL} \leq \epsilon_{m}$ and $\sigma_{KL} \leq \epsilon_{\sigma}$.
The following lemma characterizes this maximum allowed curvature as the solution of an optimization problem.

\begin{lemma}\label{lemma:curv_constraint}
The maximum allowed curvature $\kappa_{\text{th}}$ such that any past point on the path within the ball of radius $d_{\text{th}}$ satisfies the maximum tolerable KL divergence parameters $\epsilon_{m}$ and $\epsilon_{\sigma}$ for the $3$ point analysis, is the solution to the following optimization problem:

\begin{equation}\label{eq:curvature_optimization}
\begin{aligned}
& \text{maximize} & & \kappa\\
& \text{subject to } & & \max_{\phi:0< \phi \leq h_{\text{cons}}(\kappa)} 
 h_{\text{opt}}(\kappa, \phi) \leq \epsilon_d \\
& & & \kappa < 1/d_{\text{th}},
\end{aligned}
\end{equation}
where 
\begin{align*}
h_{\text{opt}}(\kappa, \phi) = \frac{\left( e^{-\frac{2}{\kappa\beta_{\text{SH}}}\sin(\frac{\phi + \Delta\phi}{2})} -  \rho e^{-\frac{2}{\kappa\beta_{\text{SH}}}\sin(\frac{\phi}{2})} \right)^2}
{(1 - e^{-\frac{4}{\kappa\beta_{\text{SH}}}\sin(\frac{\phi}{2})})(1-\rho^2)},
\end{align*}
and $h_{\text{cons}}(\kappa) = 2\sin^{-1}(\frac{\kappa d_{\text{th}}}{2})-\Delta \phi$, $\Delta \phi = 2\sin^{-1}(\frac{\kappa \Delta d}{2})$, $\rho=e^{-\Delta d/\beta_{\text{SH}}}$, $\epsilon_d = \min\left\{1-e^{-2\epsilon_{m}}, \sqrt{2}\epsilon_{\sigma}\right\}$.

\end{lemma}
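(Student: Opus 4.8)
The plan is to collapse the two KL‑divergence constraints into a single scalar inequality, identify the worst‑case three‑point configuration geometrically, and evaluate that inequality there. First, note from Lemma~\ref{lemma:mean_std_KL_divergence} that both $m_{KL}=-\tfrac12\log_e\!\bigl(1-\sigma_{\Delta m}^2/\hat\sigma^2\bigr)$ and $\sigma_{KL}=\sigma_{\Delta m}^2/(\sqrt2\,\hat\sigma^2)$ are strictly increasing in the single ratio $\xi:=\sigma_{\Delta m}^2/\hat\sigma^2\in[0,1)$. Using $\hat\sigma^2=\sigma_{\text{SH}}^2(1-\rho^2)$, the closed form of $\sigma_{\Delta m}^2$ from Lemma~\ref{lemma:mean_std_KL_divergence}, and $d_1=\Delta d$ (so that $e^{-(d_1+d_{1r})/\beta_{\text{SH}}}=\rho\,e^{-d_{1r}/\beta_{\text{SH}}}$), the pair of requirements $m_{KL}\le\epsilon_m$ and $\sigma_{KL}\le\epsilon_\sigma$ is equivalent to
\begin{align}\label{eq:reduced_kl}
\xi=\frac{\bigl(e^{-d_r/\beta_{\text{SH}}}-\rho\,e^{-d_{1r}/\beta_{\text{SH}}}\bigr)^2}{\bigl(1-e^{-2d_{1r}/\beta_{\text{SH}}}\bigr)(1-\rho^2)}\le\epsilon_d,\qquad \epsilon_d=\min\{1-e^{-2\epsilon_m},\sqrt2\,\epsilon_\sigma\}.
\end{align}
It then remains to bound the left side of \eqref{eq:reduced_kl} over all admissible locations of the third point $q_r$.

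Second, fix $q_0$ and the previous point $q_1$ with $\|q_0-q_1\|=\Delta d$, and describe the admissible set of $q_r$. By the $d_{\text{th}}$‑loop‑free hypothesis together with Lemma~\ref{lemma:ball_segment_length} (applicable since $\kappa<1/d_{\text{th}}$), every past point of the path lying inside the ball belongs to the shaded region of Fig.~\ref{fig:d_th_loop_free_balls_and_ball_characterization}c, whose boundary consists of circular arcs of curvature exactly $\kappa$. Now, for fixed $d_{1r}$, the ratio $\xi$ in \eqref{eq:reduced_kl} is decreasing in $d_r$: since $d_r\le d_1+d_{1r}=\Delta d+d_{1r}$ by the triangle inequality we have $e^{-d_r/\beta_{\text{SH}}}\ge\rho\,e^{-d_{1r}/\beta_{\text{SH}}}$, so the numerator is increasing in $e^{-d_r/\beta_{\text{SH}}}$. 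Hence, among points $q_r$ at a prescribed distance $d_{1r}$ from $q_1$ inside the shaded region, the worst case minimizes $\|q_0-q_r\|$. A short geometric argument shows this minimizer is always an endpoint of the arc $\{q_r:\|q_1-q_r\|=d_{1r}\}\cap(\text{shaded region})$ — i.e.\ it lies on one of the bounding curvature‑$\kappa$ arcs — because the point of the circle $\|q_1-q_r\|=d_{1r}$ nearest $q_0$ lies on the forward ray from $q_1$ through $q_0$, which is disjoint from the backward shaded region. By symmetry of that region about the backward axis we may fix which bounding arc it lies on, so as $d_{1r}$ ranges over its admissible values the worst‑case $q_r$ sweeps out exactly a circular arc of curvature $\kappa$ through $q_0$ and $q_1$: the constant‑curvature configuration of Fig.~\ref{fig:3_points_analysis_with_curvature}c.

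Third, on that arc (radius $1/\kappa$) parametrize by the central angle $\phi$ between $q_1$ and $q_r$. The chord lengths are $\Delta d=\frac{2}{\kappa}\sin(\Delta\phi/2)$ (which is precisely the definition of $\Delta\phi$), $d_{1r}=\frac{2}{\kappa}\sin(\phi/2)$, and $d_r=\frac{2}{\kappa}\sin\!\bigl(\tfrac{\phi+\Delta\phi}{2}\bigr)$; substituting these into \eqref{eq:reduced_kl} turns $\xi$ into exactly $h_{\text{opt}}(\kappa,\phi)$. The admissibility condition that $q_r$ lie inside the ball, $\|q_0-q_r\|\le d_{\text{th}}$, becomes $\sin\!\bigl(\tfrac{\phi+\Delta\phi}{2}\bigr)\le\tfrac{\kappa d_{\text{th}}}{2}$, which — using $\kappa<1/d_{\text{th}}$ so the relevant angles stay below $\pi/2$ and $\sin^{-1}$ is monotone — is $0<\phi\le 2\sin^{-1}(\kappa d_{\text{th}}/2)-\Delta\phi=h_{\text{cons}}(\kappa)$. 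Therefore the worst‑case KL constraint over all past points inside the ball is $\max_{0<\phi\le h_{\text{cons}}(\kappa)}h_{\text{opt}}(\kappa,\phi)\le\epsilon_d$, and the largest curvature compatible with this and with the standing requirement $\kappa<1/d_{\text{th}}$ is precisely the solution of \eqref{eq:curvature_optimization}; one can additionally check that $\max_\phi h_{\text{opt}}(\kappa,\phi)$ is nondecreasing in $\kappa$, so the feasible curvatures form an interval and $\kappa_{\text{th}}$ is a genuine threshold.

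The main obstacle is the second step: rigorously showing that the supremum of $\xi$ over the two‑dimensional shaded region is attained on its one‑dimensional curvature‑$\kappa$ boundary and that the resulting extremal triples are exactly the constant‑curvature configuration. This requires combining the monotonicity of $\xi$ in $d_r$ with the precise shape of the shaded region from Lemma~\ref{lemma:ball_segment_length}, and verifying that the extremal circular arc is itself an admissible curvature‑$\le\kappa$ path that stays inside the region with $\tfrac{\phi+\Delta\phi}{2}\in[0,\pi/2]$ so the chord lengths are monotone in the angles. Everything after that is routine substitution.
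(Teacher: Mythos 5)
Your proposal is correct and follows essentially the same route as the paper: reduce the two KL constraints to the single inequality $\sigma_{\Delta m}^2/\hat{\sigma}^2\leq\epsilon_d$ via Lemma \ref{lemma:mean_std_KL_divergence}, push the worst-case third point onto the bounding curvature-$\kappa$ arc of the shaded region by a monotonicity argument, and then substitute the chord-length relations of Fig. \ref{fig:3_points_analysis_with_curvature}c to obtain $h_{\text{opt}}$ and $h_{\text{cons}}$. The only (immaterial) difference is that you slice the region by fixed $d_{1r}$ and minimize $d_r$, whereas the paper fixes $d_r$ and maximizes $d_{1r}$, reusing the monotonicity of $h(z)$ already established in the proof of Lemma \ref{lemma:ball_radius}.
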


\begin{proof}
See Appendix \ref{appendix:proof_curv_constraint} for the proof.
\end{proof}

\begin{remark}
Ideally, we would have preferred to use the KL divergence between the approximation and the true distribution where we condition on all the past points on the path within the ball radius, as opposed to using just the point with the maximal impact.
However, such an analysis does not lend itself to a neat characterization of the maximum allowed curvature.
Through simulations, we have seen that the $3$ point analysis, as described in Lemma \ref{lemma:curv_constraint}, serves as a good proxy for the $n$ past points case on a circular path (which has a maximum curvature everywhere for a given $\kappa$).
For instance, for parameters $\kappa = 1/15$, $\Delta d = 0.1$ and $\beta_{\text{SH}}=5$ m, the KL divergence mean and standard deviation when considering all the past points of the path within the ball are $m_{KL}=6\times 10^{-7}$ and $\sigma_{KL} = 9\times 10^{-7}$ respectively.
This is comparable to the values $m_{KL}=3\times 10^{-7}$ and $\sigma_{KL} = 5\times 10^{-7}$ obtained for the $3$ point analysis from Lemma \ref{lemma:curv_constraint}.
\end{remark}

Finally, we put together all our results to provide sufficient conditions for an approximately-Markovian path.
\begin{lemma}[Approximately-Markovian Path]\label{lemma:approx_markovian}
Let  $r(s) = (x(s), y(s))$ be a path parameterized by its arc length.
The path is approximately-Markovian for given maximum tolerable KL divergence parameters $\epsilon_m$ and  $\epsilon_{\sigma}$ for the $3$ point analysis, if it satisfies the following conditions:
\begin{enumerate}
\item $r(s)$ is $d_{\text{th}}$-loop-free for ball radius $d_{\text{th}}$ (as characterized by Lemma \ref{lemma:dth_loop_free}),
\item curvature $\kappa(s) < \kappa_{\text{th}}$ for all $s$,
\end{enumerate}
where $d_{\text{th}}$ and $\kappa_{\text{th}}$ are obtained from Lemma \ref{lemma:ball_radius} and Lemma \ref{lemma:curv_constraint} respectively.
\end{lemma}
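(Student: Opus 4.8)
The plan is to assemble the conclusion directly from the three building blocks already in place: the partition of the past trajectory induced by rolling the ball of radius $d_{\text{th}}$ (Section \ref{subsubsec:ball_radius_constraint}), the geometric localization of the in-ball segment guaranteed by the $d_{\text{th}}$-loop-free property (Lemma \ref{lemma:dth_loop_free} together with Lemma \ref{lemma:ball_segment_length}), and the curvature bound $\kappa_{\text{th}}$ of Lemma \ref{lemma:curv_constraint}. First I would fix an arbitrary point of the path, declare it the center $q_0$ of the ball, and classify every previously-visited point $q_r$ into region $1$ (inside the ball and inside the shaded region of Fig. \ref{fig:d_th_loop_free_balls_and_ball_characterization}c), region $2$ (inside the ball but outside the shaded region), or region $3$ (outside the ball). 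Under condition $1$, Lemma \ref{lemma:dth_loop_free} rules out any $d_{\text{th}}$-looping, so no past point can fall in region $2$; hence every past point lies in region $1$ or region $3$.

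Next I would bound the impact of each remaining class of point on the estimate at $q_0$ via the $3$ point analysis of Lemma \ref{lemma:mean_std_KL_divergence}. For a point in region $3$, Lemma \ref{lemma:ball_radius} was derived precisely so that the stated choice of $d_{\text{th}}$ makes $\sigma_{\Delta m}^2/\hat\sigma^2 \le \epsilon_d$, and hence $m_{KL}\le\epsilon_m$ and $\sigma_{KL}\le\epsilon_\sigma$ for that point. For a point in region $1$, Lemma \ref{lemma:ball_segment_length} pins the in-ball segment to the shaded region whose boundary arcs have curvature $\kappa$; since condition $2$ enforces $\kappa(s)<\kappa_{\text{th}}$ everywhere, the worst-case angular location $\phi$ of such a point is exactly the one ranged over in the inner maximization of Lemma \ref{lemma:curv_constraint}, so $h_{\text{opt}}(\kappa,\phi)\le\epsilon_d$ and again the $3$ point KL parameters stay below $\epsilon_m$ and $\epsilon_\sigma$. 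Collecting the two bounds, at every point of the path and for every past point considered individually the Markov approximation $\mathcal N(\hat m,\hat\sigma^2)$ meets the tolerance, which is precisely the ``approximately-Markovian for the $3$ point analysis'' property as defined in Section \ref{subsubsec:approx_markov_kl_divergence}.

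The subtle point I would flag explicitly rather than hide is the passage from these pointwise, $3$ point guarantees to a statement about the full conditional $p(\gamma_{\text{SH},-0}\mid\gamma_{\text{SH},-1},\dots,\gamma_{\text{SH},-n})$: controlling each $q_r$ in isolation does not by itself give a clean KL bound on the joint $n$ point conditional against $p(\gamma_{\text{SH},-0}\mid\gamma_{\text{SH},-1})$, because the coefficient vector $\alpha=\Sigma_{1:n}^{-1}\Sigma_{0,1:n}$ couples all the past samples. The resolution here is to adopt the $3$ point analysis as the operative definition of approximate Markovianity (as already set up preceding Lemma \ref{lemma:mean_std_KL_divergence}) and to rely on the observation in the remark after Lemma \ref{lemma:curv_constraint} that on the extremal constant-curvature circular path the true $n$ point KL mean and standard deviation are numerically of the same order as their $3$ point counterparts. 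With that convention, the lemma follows by simply combining the region-$1$ bound (from $\kappa<\kappa_{\text{th}}$, Lemma \ref{lemma:curv_constraint}) and the region-$3$ bound (from the choice of $d_{\text{th}}$, Lemma \ref{lemma:ball_radius}) uniformly over all points of the path.
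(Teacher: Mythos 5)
Your proposal is correct and follows essentially the same route as the paper, which states this lemma without a separate proof precisely because it is the assembly you describe: the three-region classification of past points, with the $d_{\text{th}}$-loop-free condition eliminating region $2$, Lemma \ref{lemma:ball_radius} handling region $3$, and Lemma \ref{lemma:curv_constraint} handling region $1$. Your explicit caveat about the gap between the per-point $3$ point guarantee and the full $n$ point conditional is the same limitation the paper acknowledges in the remark following Lemma \ref{lemma:curv_constraint}, and you resolve it the same way, by taking the $3$ point analysis as the operative definition of approximate Markovianity.
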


Consider a given path.
For a given $\epsilon_m$ and $\epsilon_{\sigma}$, we can check if the path satisfies the conditions of Lemma \ref{lemma:approx_markovian}.
If it does, we can then directly use the results of Section \ref{subsec:straight_path_without_mp} to obtain the PDF of the FPD for the path.
Note that even if the path does not satisfy the conditions, the path may still be approximately-Markovian as the conditions of Lemma \ref{lemma:approx_markovian} are sufficient conditions.

\section{Characterizing FPD Considering Multipath}\label{sec:with_multipath}

The previous section analyzed the FPD to the connectivity threshold when the multipath component was ignored.
In this section, we show how to derive the FPD density in the presence of the multipath fading component, and for the most general channel model of $\Gamma(d) = \gamma_{\text{PL}}(d) + \Gamma_{\text{SH}}(d) + \Gamma_{\text{MP}}(d)$.
We begin by analyzing straight paths in Section \ref{subsec:straight_path_with_mp}, where we derive the PDF of the FPD using a recursive formulation.
We then extend our analysis to a larger space of paths in Section \ref{subsec:approx_markov_paths_with_mp}.

\subsection{Straight Paths: A Recursive Characterization}\label{subsec:straight_path_with_mp}
We first characterize the PDF of the distance traveled until connectivity for straight paths.
We consider the scenario described in Section \ref{subsec:straight_path_without_mp},
where a robot situated at a distance $d_{\text{src}}$ from a remote operator to which it needs to be connected,
moves in a straight path in the direction specified by the angle $\theta_{\text{src}}$, as shown in Fig. \ref{fig:setup_general_straight}b.
$\Gamma(d)$ represents the channel power when the robot is at distance $d$ along direction $\theta_{\text{src}}$, as marked in Fig. \ref{fig:setup_general_straight}b.

Recall that we define connectivity as the event where $\Gamma(d) \geq \gamma_{\text{th}}$.
The connectivity requirement is then given as $\Gamma(d) = \gamma_{\text{PL}}(d) + \Gamma_{\text{SH}}(d) + \Gamma_{\text{MP}}(d) \geq \gamma_{\text{th}}$, considering all the channel components.
In this case, the approach of Section \ref{subsec:straight_path_without_mp} is not applicable anymore as we no longer deal with a Markov process.
Even if the multipath component was taken to be a Gauss-Markov process (which could be a valid model for some environments (\cite{hashemi1994study})), the resultant channel power would not be Markovian, as can be verified from Lemma \ref{lemma:gauss_markov_cond}.
In this section, we assume that the robot measures the channel along the chosen straight path in discrete steps of size $\Delta d$.
We assume that $\Delta d$ is such that the multipath random variable is uncorrelated at the distance $\Delta d$ apart (this is a realistic assumption as multipath decorrelates fast (\cite{malmirchegini2012spatial})).
We then index the channel power and shadowing components accordingly, i.e., let $\Gamma_k = \Gamma(k\Delta d)$ and $\Gamma_{\text{SH},k} = \Gamma_{\text{SH}}(k\Delta d)$.
The probability of failure of connectivity at the end of $N$ steps (given the initial failure of connectivity) can then be written as
\begin{multline} \label{eq:desired_prob_sdp}
\mathrm{Pr}\left(\Gamma_1, \Gamma_2, \cdots, \Gamma_{N} < \gamma_{\text{th}}| \Gamma_0 < \gamma_{\text{th}} \right) \\
= \underset{\gamma_1,\cdots, \gamma_N < \gamma_{\text{th}}}{\int\cdots\int}p(\gamma_1,\cdots,\gamma_N|\Gamma_0 < \gamma_{\text{th}}) \mathrm{d}\gamma_1 \cdots \mathrm{d} \gamma_{N},
\end{multline}
where $p(\gamma_1,\cdots,\gamma_N|\Gamma_0<\gamma_{\text{th}})$ is the conditional joint density function of $\Gamma_1,  \cdots, \Gamma_{N}$.
Consider the computation of this integral, which is an integration in an $N$ dimensional space.
If we discretize the domain of $\Gamma_k$ into $M$ parts, then a direct computation of the FPD for upto $N$ steps would have a computational complexity of $O(NM^N)$, which is infeasible for high values of $M$ and $N$.
Instead, we show how this can be solved efficiently through a recursive integral computation in $O(NM\log(M))$.
In contrast, our previously proposed dynamic programming approach of \linebreak \cite{muralidharan2017first} had a computational complexity of $O(N^2M^2)$.

As mentioned before, the robot measures the channel in discrete steps of size $\Delta d$.
Let $d_k = k\Delta d$ denote the distance when $k$ steps are taken.
Then, it can be shown, using (\ref{eq:mean_var_trans_pdf}), that the shadowing component is an autoregressive AR($1$) process, the continuous analogue of which is the Ornstein-Uhlenbeck process (note that the shadowing component is Markovian):
\begin{align*}
\Gamma_{\text{SH},k+1} = \rho\Gamma_{\text{SH},k} + \sigma_{\text{SH}}\sqrt{1- \rho^2}Z_{k},
\end{align*}
where $\rho = e^{-\Delta d/\beta_{\text{SH}}}$ and $Z_k$ are i.i.d. with a standard normal distribution.
The conditional random variable $\Gamma_{\text{SH},k+1}|\gamma_{\text{SH},k}$ is thus a Gaussian random variable with mean $\rho \gamma_{\text{SH},k}$ and variance $\sigma_{\text{SH}}^2 (1-\rho^2)$.

Note that the desired probability of (\ref{eq:desired_prob_sdp}) can be expressed as
\begin{align}\label{eq:cond_desired_prob_alt}
\mathrm{Pr}\left(\Gamma_1,\cdots, \Gamma_N<\gamma_{\text{th}}|\Gamma_0 < \gamma_{\text{th}}\right)  = \frac{\mathrm{Pr}\left(\Gamma_0, \cdots, \Gamma_N < \gamma_{\text{th}} \right)}{\mathrm{Pr}\left(\Gamma_0 < \gamma_{\text{th}}\right)}.
\end{align}

We next show how to compute $\mathrm{Pr}(\Gamma_0, \Gamma_1, \cdots \Gamma_N < \gamma_{\text{th}})$ via a recursive characterization.
This is inspired in part by the calculation of orthant probabilities for auto-regressive sequences in \cite{craig2008new}.
Define the set of functions $\mathcal{J}_{k}$, as follows:

\vspace{0.15in}
\begin{align}\label{eq:cost_to_go}
& \mathcal{J}_{k}(\gamma_{\text{SH},k})  = \int_{\gamma_{\text{MP},k}=-\infty}^{\gamma_{\text{th}} - \gamma_{\text{PL}}(d_k) - \gamma_{\text{SH},k}}\nonumber\\
& \;\;\;\;\;\;\;\;\;\;\;\;\;\; \times \underset{S_{k-1}}{\int \cdots \int}p(\gamma_{\text{SH},0},\gamma_{\text{MP},0},\cdots,\gamma_{\text{SH},k}, \gamma_{\text{MP},k}) \nonumber \\
& \;\;\;\;\;\;\;\;\;\;\;\;\;\times \mathrm{d}\gamma_{\text{SH},0}\mathrm{d}\gamma_{\text{MP},0}\cdots \mathrm{d}\gamma_{\text{SH},k-1}\mathrm{d}\gamma_{\text{MP},k-1} \mathrm{d}\gamma_{\text{MP},k}
\end{align}
where $S_{k-1} = \cap_{i=0}^{k-1}\{\gamma_{\text{SH},i}, \gamma_{\text{MP},i}:\gamma_{\text{PL}}(d_i) + \gamma_{\text{SH},i} + \gamma_{\text{MP},i} < \gamma_{\text{th}}\}$ and $p(\gamma_{\text{SH},0},\gamma_{\text{MP},0},\cdots,\gamma_{\text{SH},k}, \gamma_{\text{MP},k})$ is the joint density of $\Gamma_{\text{SH},0},\Gamma_{\text{MP},0}, \cdots ,\Gamma_{\text{SH},k},\Gamma_{\text{MP},k}$.
Note that 
\begin{align}\label{eq:cond_desired_prob_sdp}
&\mathrm{Pr}\left(\Gamma_0, \Gamma_1, \cdots, \Gamma_N < \gamma_{\text{th}} \right) \nonumber\\
& \;\;\;\;\;\;\;\;\;\;\;\;= \underset{S^{N}}{\int \cdots \int}p(\gamma_{\text{SH},0},\gamma_{\text{MP},0},\cdots,\gamma_{\text{SH},N}, \gamma_{\text{MP},N}) \nonumber \\
& \;\;\;\;\;\;\;\;\;\;\;\;\;\;\;\;\;\;\;\;\;\;\;\;\;\;\;\; \times \mathrm{d}\gamma_{\text{SH},0}\mathrm{d}\gamma_{\text{MP},0}\cdots \mathrm{d}\gamma_{\text{SH},N}\mathrm{d}\gamma_{\text{MP},N} \nonumber \\
& \;\;\;\;\;\;\;\;\;\;\;\;= \int_{\gamma_{\text{SH},N}=-\infty}^{\infty}\mathcal{J}_{N}(\gamma_{\text{SH},N})\mathrm{d}\gamma_{\text{SH},N}.
\end{align}
In the following lemma we show how to compute $\mathcal{J}_{k}(\gamma_{\text{SH},k})$ recursively.

\begin{lemma}\label{lemma:recursive_characterization}
The functions $\mathcal{J}_{k}$, for $k =1, \cdots, N$, of (\ref{eq:cost_to_go}) can be computed by the recursion:
\begin{align*}
\mathcal{J}_{k+1}(\gamma_{\text{SH},k+1}) &= F_{\text{MP}}(\gamma_{\text{th}} - \gamma_{\text{PL}}(d_{k+1})-\gamma_{\text{SH},k+1})\\
& \;\;\;\times \frac{1}{\rho}\int_{u=-\infty}^{\infty} \varphi\left(\frac{\gamma_{\text{SH},k+1} - u}{\sigma_{\text{SH}}\sqrt{1-\rho}}\right)\mathcal{J}_{k}(\frac{u}{\rho})\mathrm{d}u,
\end{align*}
initialized with 
\begin{align*}
\mathcal{J}_{0}(\gamma_{\text{SH},0}) = F_{\text{MP}}(\gamma_{\text{th}} - \gamma_{\text{PL}}(0)-\gamma_{\text{SH},0})\varphi\left(\frac{\gamma_{\text{SH},0}}{\sigma_{\text{SH}}}\right),
\end{align*}
where $F_{\text{MP}}(.)$ is the CDF of the multipath random variable $\Gamma_{\text{MP}}$ and $\varphi(.)$ is the standard Gaussian density function.
\end{lemma}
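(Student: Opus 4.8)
The plan is to start from the explicit $(2k{+}1)$-fold integral in the definition (\ref{eq:cost_to_go}) of $\mathcal{J}_k$, integrate out the multipath variables first, and then exploit the AR($1$)/Markov structure of the shadowing chain to peel off one shadowing integration variable at a time; the claimed recursion falls out by recognizing the resulting inner integral as $\mathcal{J}_k$ itself.

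\textbf{Step 1 (factor the joint density).} Invoking the modeling assumptions of this section — the per-step multipath variables $\Gamma_{\text{MP},0},\dots,\Gamma_{\text{MP},k}$ are mutually independent and independent of the shadowing process, and $\Gamma_{\text{SH},k}$ is the chain $\Gamma_{\text{SH},j+1}=\rho\Gamma_{\text{SH},j}+\sigma_{\text{SH}}\sqrt{1-\rho^2}Z_j$ with marginal $\mathcal{N}(0,\sigma_{\text{SH}}^2)$ at every $j$ — the joint density factors as
\begin{align*}
p(\gamma_{\text{SH},0},\gamma_{\text{MP},0},\dots,\gamma_{\text{SH},k},\gamma_{\text{MP},k}) = \Big[\prod_{i=0}^{k}f_{\text{MP}}(\gamma_{\text{MP},i})\Big]\, p_{\text{SH}}(\gamma_{\text{SH},0})\prod_{i=1}^{k}p(\gamma_{\text{SH},i}\mid\gamma_{\text{SH},i-1}),
\end{align*}
where $f_{\text{MP}}=F_{\text{MP}}'$, $p_{\text{SH}}(\gamma)=\tfrac{1}{\sigma_{\text{SH}}}\varphi(\gamma/\sigma_{\text{SH}})$, and $p(\gamma_{\text{SH},i}\mid\gamma_{\text{SH},i-1})$ is the $\mathcal{N}(\rho\gamma_{\text{SH},i-1},\sigma_{\text{SH}}^2(1-\rho^2))$ transition density read off from (\ref{eq:mean_var_trans_pdf}).

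\textbf{Step 2 (integrate out multipath).} In (\ref{eq:cost_to_go}) every connectivity-failure event $\{\gamma_{\text{PL}}(d_i)+\gamma_{\text{SH},i}+\gamma_{\text{MP},i}<\gamma_{\text{th}}\}$, as well as the explicit limit on the outermost $\gamma_{\text{MP},k}$ integral, constrains only the corresponding $\gamma_{\text{MP},i}$ to the half-line $(-\infty,\gamma_{\text{th}}-\gamma_{\text{PL}}(d_i)-\gamma_{\text{SH},i})$. Carrying out these $\gamma_{\text{MP},i}$ integrals replaces each one by the factor $G_i(\gamma_{\text{SH},i}):=F_{\text{MP}}(\gamma_{\text{th}}-\gamma_{\text{PL}}(d_i)-\gamma_{\text{SH},i})$ and leaves no residual constraint on the shadowing variables, so
\begin{align*}
\mathcal{J}_k(\gamma_{\text{SH},k}) = G_k(\gamma_{\text{SH},k})\int_{\mathbb{R}^{k}}\Big[\prod_{i=0}^{k-1}G_i(\gamma_{\text{SH},i})\Big]p_{\text{SH}}(\gamma_{\text{SH},0})\prod_{i=1}^{k}p(\gamma_{\text{SH},i}\mid\gamma_{\text{SH},i-1})\,\mathrm{d}\gamma_{\text{SH},0}\cdots\mathrm{d}\gamma_{\text{SH},k-1}.
\end{align*}
For $k=0$ the $S_{-1}$-integral is vacuous and independence of $\Gamma_{\text{SH},0}$ and $\Gamma_{\text{MP},0}$ gives $\mathcal{J}_0(\gamma_{\text{SH},0})=p_{\text{SH}}(\gamma_{\text{SH},0})\,F_{\text{MP}}(\gamma_{\text{th}}-\gamma_{\text{PL}}(0)-\gamma_{\text{SH},0})$, which is the stated initialization.

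\textbf{Step 3 (peel off $\gamma_{\text{SH},k}$).} Writing the Step 2 formula for $\mathcal{J}_{k+1}$ and using Tonelli (all integrands are nonnegative) to move the $\gamma_{\text{SH},k}$ integration to the outside, the only factor in the $\mathcal{J}_{k+1}$ integrand depending on $\gamma_{\text{SH},k+1}$ is $G_{k+1}(\gamma_{\text{SH},k+1})\,p(\gamma_{\text{SH},k+1}\mid\gamma_{\text{SH},k})$, while the remaining factors, once multiplied by $G_k(\gamma_{\text{SH},k})$, are exactly the integrand of $\mathcal{J}_k(\gamma_{\text{SH},k})$ from Step 2. Hence $\mathcal{J}_{k+1}(\gamma_{\text{SH},k+1}) = G_{k+1}(\gamma_{\text{SH},k+1})\int_{\mathbb{R}}p(\gamma_{\text{SH},k+1}\mid\gamma_{\text{SH},k})\,\mathcal{J}_k(\gamma_{\text{SH},k})\,\mathrm{d}\gamma_{\text{SH},k}$, and substituting the Gaussian transition density together with the change of variable $u=\rho\gamma_{\text{SH},k}$ turns the integral into the claimed convolution against $\varphi$, completing the induction.

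\textbf{Main obstacle.} There is no deep difficulty; the work is careful bookkeeping. The step needing attention is the passage from the raw definition to the clean integral in Step 2 — one must verify that each multipath integration "uses up" exactly its own constraint so that all shadowing variables end up ranging over $\mathbb{R}$ — together with the Tonelli reordering in Step 3 and the recognition of the self-similar inner integral as $\mathcal{J}_k$. A secondary nuisance is simply bookkeeping the normalization constant and the exponent ($\sqrt{1-\rho^2}$) in the Gaussian kernel after the $u=\rho\gamma_{\text{SH},k}$ substitution so that it matches the displayed recursion.
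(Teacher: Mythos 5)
Your proof is correct and follows essentially the same route as the paper's: both exploit the independence of the per-step multipath variables and the AR($1$)/Markov structure of the shadowing chain to factor the joint density, recognize the remaining inner integral as $\mathcal{J}_k(\gamma_{\text{SH},k})$, and finish with the substitution $u=\rho\gamma_{\text{SH},k}$; your only cosmetic difference is integrating out all the multipath variables up front rather than peeling off just the last one. The normalization quirks you flag (the $\sqrt{1-\rho}$ versus $\sqrt{1-\rho^2}$ exponent and the absent $1/(\sigma_{\text{SH}}\sqrt{1-\rho^2})$ factor in the Gaussian kernel) appear in the paper's own displayed recursion as well, so they reflect typographical issues in the statement rather than a gap in your argument.
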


\begin{proof}
It can be seen that this clearly holds for $k=0$:

\begin{align*}
\mathcal{J}_{0}(\gamma_{\text{SH},0}) & = \int_{\gamma_{\text{MP},k}=-\infty}^{\gamma_{\text{th}} - \gamma_{\text{PL}}(d_0) - \gamma_{\text{SH},0}}p(\gamma_{\text{SH},0},\gamma_{\text{MP},0}) 
\mathrm{d}\gamma_{\text{MP},0}\\
& = F_{\text{MP}}(\gamma_{\text{th}} - \gamma_{\text{PL}}(0)-\gamma_{\text{SH},0})\varphi\left(\frac{\gamma_{\text{SH},0}}{\sigma_{\text{SH}}}\right).
\end{align*}
Next, $\mathcal{J}_{k+1}(\gamma_{\text{SH},k+1})$ can be expanded as
\begin{align*}
& \mathcal{J}_{k+1}(\gamma_{\text{SH},k+1}) \\
& = 
\int_{-\infty}^{\gamma_{\text{th,MP},k+1}} \underset{S_{k}}{\int \cdots \int}p(\gamma_{\text{SH},0},\gamma_{\text{MP},0},\cdots,\gamma_{\text{SH},k+1}, \gamma_{\text{MP},k+1}) \nonumber \\
& \;\;\;\;\;\;\;\;\;\;\;\;\;\;\;\;\;\;\;\;\;\times \mathrm{d}\gamma_{\text{SH},0}\mathrm{d}\gamma_{\text{MP},0}\cdots \mathrm{d}\gamma_{\text{SH},k}\mathrm{d}\gamma_{\text{MP},k} \mathrm{d}\gamma_{\text{MP},k+1}\\
&  = \int_{-\infty}^{\gamma_{\text{th,MP},k+1}}p(\gamma_{\text{MP},k+1})\mathrm{d}\gamma_{\text{MP},k+1}\int_{-\infty}^{\infty}p(\gamma_{\text{SH},k+1}|\gamma_{\text{SH},k})\\
& \;\;\;\;\;\;\;\; \times \int_{-\infty}^{\gamma_{\text{th,MP},k}} \underset{S_{k-1}}{\int \cdots \int}p(\gamma_{\text{SH},0},\gamma_{\text{MP},0},\cdots,\gamma_{\text{SH},k}, \gamma_{\text{MP},k}) \nonumber \\
& \;\;\;\;\;\;\;\;\;\;\;\;\;\;\;\;\;\;\times \mathrm{d}\gamma_{\text{SH},0}\mathrm{d}\gamma_{\text{MP},0}\cdots \mathrm{d}\gamma_{\text{SH},k-1}\mathrm{d}\gamma_{\text{MP},k-1} \mathrm{d}\gamma_{\text{MP},k}\\
&  = F_{\text{MP}}(\gamma_{\text{th,MP},k+1}) \\
&  \;\;\;\;\;\;\;\;\;\;\;\;\; \times \int_{-\infty}^{\infty} \varphi\left(\frac{\gamma_{\text{SH},k+1} - \rho \gamma_{\text{SH},k}}{\sigma_{\text{SH}}\sqrt{1-\rho}}\right)\mathcal{J}_{k}(\gamma_{\text{SH},k})\mathrm{d}\gamma_{\text{SH},k}\\
&  =  \frac{F_{\text{MP}}(\gamma_{\text{th,MP},k+1})}{\rho}\int_{u=-\infty}^{\infty} \varphi\left(\frac{\gamma_{\text{SH},k+1} - u}{\sigma_{\text{SH}}\sqrt{1-\rho}}\right)\mathcal{J}_{k}(\frac{u}{\rho})\mathrm{d}u,
\end{align*}
where $\gamma_{\text{th,MP},k} = \gamma_{\text{th}} - \gamma_{\text{PL}}(d) - \gamma_{\text{SH},k}$.
\end{proof}
\begin{remark}
Note that the recursive integral in Lemma \ref{lemma:recursive_characterization} is in the form of a convolution.
This can be computed efficiently using the Fast Fourier transform.
\end{remark}
Using Lemma \ref{lemma:recursive_characterization}, we can compute $\mathrm{Pr}\left(\Gamma_0, \Gamma_1, \cdots, \Gamma_N < \gamma_{\text{th}} \right)$ as shown in (\ref{eq:cond_desired_prob_sdp}), which in turn is used to compute \linebreak $\mathrm{Pr}\left(\Gamma_1, \cdots \Gamma_{N} < \gamma_{\text{th}}|\Gamma_{0}<\gamma_{\text{th}}\right)$ via (\ref{eq:cond_desired_prob_alt}).

Next, we use this result to calculate the FPD probability.
Let $\mathcal{K} = \min_{k=1,2,\cdots}\left\{k:\Gamma_k \geq \gamma_{\text{th}}, \Gamma_0 < \gamma_{\text{th}}\right\}$ be the random variable which denotes the upcrossing first passage step to connectivity given that $\Gamma_0$ is restricted to lie below $\gamma_{\text{th}}$.
Then,
\begin{align*}
\mathrm{Pr}(\mathcal{K}=k) & = \mathrm{Pr}\left(\Gamma_1, \cdots \Gamma_{k-1} < \gamma_{\text{th}}, \Gamma_{k}\geq \gamma_{\text{th}}|\Gamma_{0}<\gamma_{\text{th}}\right)\\
& = \mathrm{Pr}\left(\Gamma_1, \cdots \Gamma_{k-1} < \gamma_{\text{th}}|\Gamma_{0}<\gamma_{\text{th}}\right)\\
&\;\;\;\;\;\;\;\;\;\;\;\;\;\;\;\;\;\;\;- \mathrm{Pr}\left(\Gamma_1, \cdots \Gamma_{k} < \gamma_{\text{th}}|\Gamma_{0}<\gamma_{\text{th}}\right),
\end{align*}
where both terms on the right hand side can be obtained from our recursive characterization using Lemma \ref{lemma:recursive_characterization}.

\subsection{Approximately-Markovian Paths}\label{subsec:approx_markov_paths_with_mp}

In this section, we characterize the space of paths (beyond straight paths) for which we can characterize the statistics of the distance traveled until connectivity.
As we saw in Section \ref{subsec:straight_path_with_mp}, the recursive characterization of Lemma \ref{lemma:recursive_characterization} depends on the channel shadowing power being a Markov process.
Specifically, the proof of Lemma \ref{lemma:recursive_characterization} requires that 
\linebreak $p(\gamma_{\text{SH},-0}| \gamma_{\text{SH},-1}, \gamma_{\text{SH},-2}, \cdots) = p(\gamma_{\text{SH},-0}| \gamma_{\text{SH},-1}) $, where $\Gamma_{\text{SH},-0}$ is the shadowing power at the current location and \linebreak $\Gamma_{\text{SH}, -1}, \Gamma_{\text{SH},-2}, \cdots $ are the channel shadowing power at previously visited points, as shown in Fig. \ref{fig:path_rolling_ball_and_discretized} (top).
We can then directly use the tools and strategies developed in Section \ref{subsec:approx_markov_paths_without_mp} to characterize the space of paths that are approximately-Markovian.
We then obtain the statistics of the FPD for these paths using Lemma \ref{lemma:recursive_characterization}.

\begin{remark}[Computational complexity]
A natural question that arises is: \emph{why not use the results of Section \ref{subsec:straight_path_with_mp} to tackle the case without considering multipath of Section \ref{subsec:straight_path_without_mp}?}
We next address this.
As discussed in Section \ref{subsec:straight_path_with_mp}, the computation cost of Lemma \ref{lemma:recursive_characterization} for upto $N$ steps is $O(NM\log(M))$.
In contrast, the computational cost of Theorem \ref{theorem:upcrossing_fpd} for the case without considering multipath, for upto $N$ steps, is $O(N^2)$.
Since $M>>N$, the stochastic differential equation approach is more computationally efficient.
Moreover, the characterization of the $\epsilon$-upcrossing FPD of Section \ref{subsec:straight_path_without_mp} can be used for analytical purposes. 
\end{remark}

\section{Numerical Results based on Real Channel Data}\label{sec:numerical_results}

In this section, we validate the derivations of Sections \ref{sec:without_multipath} and \ref{sec:with_multipath} in a simulation environment with real channel parameters.
We also highlight interesting trends of the FPD statistics as a function of the channel parameters.
The channel is generated using the channel model described in Section \ref{subsec:channel_model}, with parameters obtained from real channel measurements in downtown San Francisco (\cite{smith2004urban}): $n_{\text{PL}} = 4.2$, $\sigma_{\text{SH}}^2 = 8.41$ and $\beta_{\text{SH}} = 12.92$ m.
We impose a minimum required received SNR of $20$ dB,
the noise power is taken to be a realistic $-100$ dBmW,
and the transmit power is taken to be $30$ dBmW, which results in a channel power connectivity threshold of $\gamma_{\text{th}} = -110$ dB.
We furthermore take the upcrossing FDP constant to be $\epsilon = 0.1$ in the simulation results.

We consider a discretization step size of $\Delta d = 0.03$ m.
Let the maximum tolerable KL divergence parameters be $\epsilon_m = 0.001$ and $\epsilon_{\sigma} = 0.001$.
Then, the ball radius $d_{\text{th}} = 9.5$ m and the maximum allowed curvature $\kappa_{\text{th}} = 1.04$  satisfy Lemma \ref{lemma:ball_radius} and Lemma \ref{lemma:curv_constraint} respectively.
We will demonstrate the efficacy of our proposed approaches through two different paths that satisfy these constraints and are thus \linebreak approximately-Markovian:
1) an archimedian spiral with equation $r_d = 11 + 5e^{\theta}$, and
2) a logarithmic spiral with equation $r_d = 11e^{0.5\theta}$,
where both equations are in polar coordinates $(r_d, \theta)$.
Figures \ref{fig:fpd_simulations}a and \ref{fig:fpd_simulations}b show the path and the curvature along the path of the archimedian spiral respectively, while
figures \ref{fig:fpd_simulations}g and \ref{fig:fpd_simulations}h show the path and the curvature along the path of the logarithmic spiral respectively.
The remote station is located at the origin as denoted in figures \ref{fig:fpd_simulations}a and \ref{fig:fpd_simulations}g.

\subsection{Results Without Considering Multipath}\label{subsec:simulations_sde}
We first consider the case without multipath.
Figures \ref{fig:fpd_simulations}c and \ref{fig:fpd_simulations}d show the PDF and CDF respectively of the upcrossing FPD for the archimedian path.
Figures \ref{fig:fpd_simulations}i and \ref{fig:fpd_simulations}j show the PDF and CDF respectively of the upcrossing FPD for the logarithmic path.
We can see that, for both paths, our theoretical derivations match the true statistics obtained via Monte Carlo simulations very well.

\begin{figure*}
    \centering
    \includegraphics[width=\linewidth]{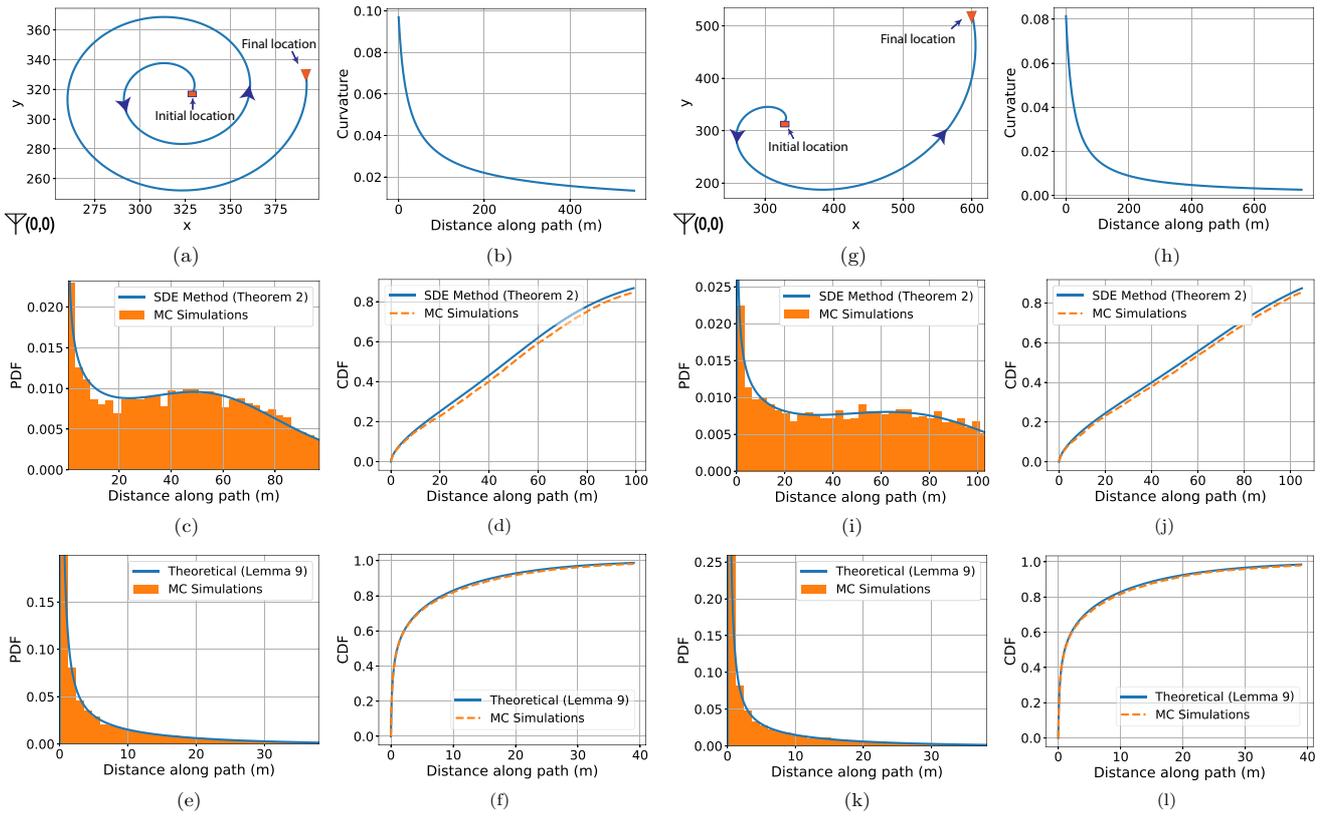}
    \caption{\small (a)-(f): (a) Archimedian spiral as the path of the robot, (b) curvature along the archimedian spiral, (c) PDF and (d) CDF of upcrossing FPD without considering multipath, (e) PDF and (f) CDF of upcrossing FPD when including multipath.\\
    (g)-(l): (g) Logarithmic spiral as the path of the robot, (h) curvature along the logarithmic spiral, (i) PDF and (j) CDF of upcrossing FPD without considering multipath, (k) PDF and (l) CDF of upcrossing FPD when including multipath.}
    \label{fig:fpd_simulations}
\end{figure*}

\subsection{Results When Including Multipath}\label{subsec:simulations_sdp}

Next, consider the case where multipath of the environment can not be neglected.
We then simulate the multipath fading as an uncorrelated Rician random variable.
Rician distribution is a common distribution for characterizing multipath (\cite{rappaport1996wireless}) and is given by
\begin{align*}
f_{\text{ric}}(z) = (1+K_{\text{ric}}) e^{-K_{\text{ric}}-(1+K_{\text{ric}})z} I_{0}\left(2\sqrt{zK_{\text{ric}}(1+K_{\text{ric}})}\right),
\end{align*}
where $I_{0}(.)$ is the modified $0^{\text{th}}$ order Bessel function and the parameter $K_{\text{ric}}$ is the ratio of the power in the line of sight component to the power in the non-line of sight components of the channel.
We use the rician parameter $K_{\text{ric}} = 1.59$, which we obtain from the real channel measurements in downtown San Francisco.
We further assume that the multipath component gets uncorrelated at our discretization interval of $0.03$ m, which is a reasonable assumption in many cases (\cite{malmirchegini2012spatial}).

Figures \ref{fig:fpd_simulations}e and \ref{fig:fpd_simulations}f show the PDF and CDF respectively of the upcrossing FPD for the archimedian path.
Figures \ref{fig:fpd_simulations}k and \ref{fig:fpd_simulations}l show the PDF and CDF respectively of the upcrossing FPD for the logarithmic path.
The histogram obtained via Monte Carlo simulations is also plotted for comparison.
It can be seen that in the case of both paths our derivations match the true statistics very well.
 
\begin{figure}
    \centering
    \includegraphics[width=1\linewidth]{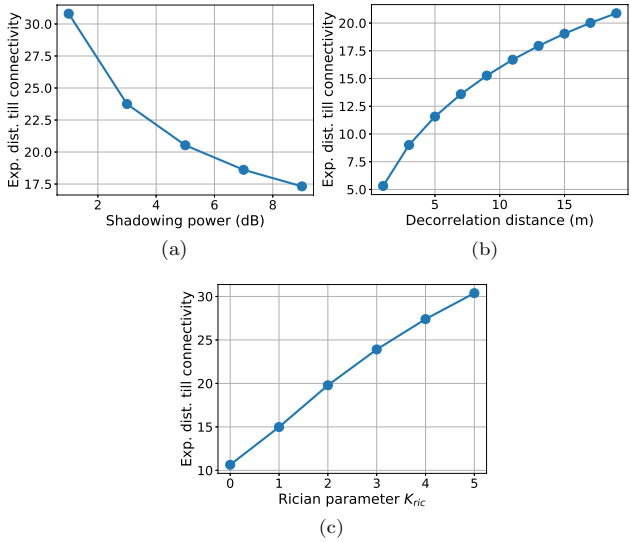}
    \caption{\small Expected distance until connectivity (with multipath) as a function of the (a) shadowing power, (b) shadowing decorrelation distance and (c) rician parameter $K_{\text{ric}}$, for the case of a straight path with  $d_{\text{src}}=550$ m and $\theta_{\text{src}} = 0$ rad.}
    \label{fig:vary_ch_params}
\end{figure}

Finally, different environments will have different underlying channel parameters. Thus, we next consider the impact of the underlying channel parameters on the FPD.
Figures \ref{fig:vary_ch_params}a and \ref{fig:vary_ch_params}b show the expected distance traveled as a function of the shadowing decorrelation distance ($\beta_{\text{SH}}$) and the shadowing variance ($\sigma^{2}_{\text{SH}}$) respectively when $d_{\text{src}} = 550$ m and $\theta_{\text{src}} = 0$ rad, along a straight path.
Increasing the shadowing power directly increases the spatial variance of the channel power. 
Thus, with a higher probability, $\Gamma(d)$ stumbles upon the connectivity threshold earlier, resulting in a smaller FPD, as can be seen.
An increase in the decorrelation distance, on the other hand, implies a greater spatial correlation of the channel power and decreases the spatial variation.
Thus, we observe that the expected traveled distance increases when increasing the decorrelation distance.
Figure \ref{fig:vary_ch_params}c shows the expected distance until connectivity as a function of $K_{\text{ric}}$ of multipath.
For large values of $K_{\text{ric}}$, the line of sight component dominates and results in a more deterministic multipath term.
Decreasing $K_{\text{ric}}$, on the other hand, results in an increase in the variance of the multipath component, thus increasing the randomness of the channel.
Thus as $K_{\text{ric}}$ decreases, $\Gamma(d)$ would cross the connectivity threshold earlier with a higher probability (due to the increase in channel randomness), resulting in a smaller expected distance traveled.

\section{Conclusions}
In this paper, we considered the scenario of a robot that seeks to get connected to another robot or a remote operator, as it moves along a path.
We started by mathematically characterizing the PDF of the distance traveled until connectivity along straight paths,
using a stochastic differential equation analysis when multipath can be ignored,
and a recursive characterization for the case of multipath.
We then developed a theoretical characterization of a more general space of paths, based on properties of the path such as its curvature, for which we can theoretically characterize the PDF of the FPD.
Our characterizations not only enable new theoretical analysis but also allow for an efficient low-complexity implementation.
Finally, we confirmed our theoretical results with simulations with real channel parameters from downtown San Francisco, and highlighted interesting trends of the FPD.

\appendix
\section{Appendix}

\subsection{Proof of Lemma \ref{lemma:ball_segment_length}}\label{appendix:proof_ball_segment_length}
\begin{proof}
Let $r(s)=(x(s), y(s))$ be the equation of the path parameterized by arc length.
Since the path is parameterized by arc length, we have 
\begin{align}\label{eq:arc_length}
\|r'(s)\|^2 = |x'(s)|^2 + |y'(s)|^2 = 1.
\end{align}
Moreover, we have the curvature constraint
\begin{align}\label{eq:curv_constraint}
\|r''(s)\|^2 = |x''(s)|^2 + |y''(s)|^2 \leq \kappa^2.
\end{align}
Let $s_0$ denote the current point, i.e., the center of the ball.
Without loss of generality, let $(x(s_0), y(s_0)) = (0,0)$ and let the tangent at $s_0$  be parallel to the x-axis, i.e., $x'(s_0)=-1$, $y'(s_0)=0$, as shown in Fig. \ref{fig:d_th_loop_free_balls_and_ball_characterization}c.

We first prove that no point of $r_{\text{ball}}$ can lie outside the shaded region of Fig. \ref{fig:d_th_loop_free_balls_and_ball_characterization}c.
Note that the shaded region has a boundary on the left corresponding to $x=-d_{\text{th}}$, and
the two other boundaries correspond to circular arcs with curvature $\kappa$.
Let us consider traveling backward along the path. 
For a given distance $d_x$ traveled along the negative x-axis (i.e., $x(s)=-d_x$), the path which maximizes the distance traveled along the y-axis $|y(s)|$, is the one that minimizes the x-axis velocity $|x'(s)|$ and maximizes the y-axis velocity $|y'(s)|$ the most.
This corresponds to the circular path $(R_c\cos(s/R_c), R_c\sin(s/R_c))$ with constant curvature $\kappa$.
Thus, for any path satisfying (\ref{eq:arc_length}) and (\ref{eq:curv_constraint}), the y-axis coordinate is bounded above and below by the circular arc.
This implies that the segment $r_{\text{ball}}$ lies within the shaded region.

We next show that if $\kappa< 1/d_{\text{th}}$, then $r_{\text{ball}}$ cannot loop within the ball.
Note that, by definition, $r_{\text{ball}}$ loops within the ball if $x'(s) > 0$ for some point on the path within the shaded region.
The circular path with curvature $\kappa$ is the path that maximizes $x'(s)$.
From Fig. \ref{fig:d_th_loop_free_balls_and_ball_characterization}c, we can see that if $\kappa = 1/d_{\text{th}}$, then $x'(s)=0$ at $x(s)=-d_{\text{th}}$ for the circular path.
Thus, if $\kappa < 1/ d_{\text{th}}$, we have $x'(s)>0$ for any point of the path within the shaded region.

Finally, we determine the bound on the length of $r_{\text{ball}}$.
If we travel a distance of $d_{\text{th}}$ along the negative x-axis, then we are guaranteed to have exit the ball.
The path that maximizes its length before covering $d_{\text{th}}$ along the negative x-axis, would be the one that reduces the x-axis velocity $|x'(s)|$ the most.
This maximal length path corresponds to the circular path with constant curvature $\kappa$.
Any other path satisfying (\ref{eq:arc_length}) and (\ref{eq:curv_constraint}) would exit the shaded region before this circular path, i.e., the length of the segment of any path would be less that the length of this circular arc.
The length of this circular arc can be found from the geometry of the figure.
The chord length can be seen to be $2R_{c}\sin(\phi/2)$ where 
$R_c = 1/\kappa$.
Moreover, we have $\cos(\phi/2) = \frac{d_{\text{th}}}{2R_{c}\sin(\phi/2)}$ which implies that $\phi = \sin^{-1}\left(\frac{d_{\text{th}}}{R_c}\right)$.
This gives us the arc length as $2\pi R_c \times \frac{\phi}{2\pi} = R_c \sin^{-1}\left(\frac{d_{\text{th}}}{R_c}\right) = \frac{1}{\kappa}\sin^{-1}\left(\kappa d_{\text{th}}\right)$.
\end{proof}

\subsection{Proof of Lemma \ref{lemma:mean_std_KL_divergence}}\label{appendix:proof_KL_divergence_3_points}
\begin{proof}
Using (\ref{eq:est_mean}), we can show that $m = \alpha_1\Gamma_{\text{SH},-1} + \alpha_r\Gamma_{\text{SH},r}$ where 
\begin{align*}
\alpha_1 & = \frac{e^{-d_1/\beta_{\text{SH}}}-e^{-(d_{1r}+d_{r})/\beta_{\text{SH}}}}{1-e^{-2d_{1r}/\beta_{\text{SH}}}},
\end{align*}
\begin{align*}
\alpha_r & = \frac{e^{-d_r/\beta_{\text{SH}}}-e^{-(d_1+d_{1r})/\beta_{\text{SH}}}}{1-e^{-2d_{1r}/\beta_{\text{SH}}}}.
\end{align*}
Then, the difference in mean $\Delta m = m - \hat{m}$ is distributed as $\mathcal{N}(0, \sigma_{\Delta m}^2)$, where using (\ref{eq:del_m_var}) we have
\begin{align*}
\sigma_{\Delta m}^2 =  \sigma_{\text{SH}}^2 \frac{\left(e^{-d_r/\beta_{\text{SH}}}-e^{-(d_1+d_{1r})/\beta_{\text{SH}}}\right)^2}{1-e^{-2d_{1r}/\beta_{\text{SH}}}}.
\end{align*}
Moreover, using (\ref{eq:est_var}) we can calculate
\begin{align*}
\frac{\sigma^2}{\sigma^2_{\text{SH}}} & = 1 - \frac{e^{-2d_{1}/\beta_{\text{SH}}} + e^{-2d_{r}/\beta_{\text{SH}}}-2e^{-(d_{1}+d_{r}+d_{1r})/\beta_{\text{SH}}}}{1-e^{-2d_{1r}/\beta_{\text{SH}}}}.
\end{align*}
The difference in variance $\Delta \sigma^2 = \sigma^2 - \hat{\sigma}^2$ can be calculated as
\begin{align*}
\Delta \sigma^2 & = -\sigma_{\text{SH}}^2\frac{\left(e^{-d_r/\beta_{\text{SH}}}-e^{-(d_1+d_{1r})/\beta_{\text{SH}}}\right)^2}{1-e^{-2d_{1r}/\beta_{\text{SH}}}} \\
& = - \sigma_{\Delta m}^2.
\end{align*}
From (\ref{eq:KL_divergence}), we then have
\begin{align*}
KL & = \frac{\sigma_{\Delta m}^2}{2\hat{\sigma}^2}\chi_1^{2} + \frac{1}{2}\left(-\frac{|\Delta\sigma^2|}{\hat{\sigma}^2} - \log_{e}\left(1-\frac{|\Delta \sigma^2|}{\hat{\sigma}^2}\right)\right).
\end{align*}
Since $\mathbb{E}[\chi_{1}^{2}] = 1$ and $\mathrm{Var}[\chi_{1}^2] = 2$, we can calculate the mean $m_{KL}$ and the standard deviation $\sigma_{KL}$ to be as stated in the lemma.
\end{proof}

\subsection{Proof of Lemma \ref{lemma:ball_radius}}\label{appendix:proof_ball_radius}
\begin{proof}
Consider all possible locations of the general point (see Fig. \ref{fig:3_points_analysis_with_curvature}a) at a fixed distance $d_{r}$.
From the geometry of Fig. \ref{fig:3_points_analysis_with_curvature}a, we can see that $d_{1r} = \sqrt{d_1^2+d_r^2 -2d_1d_r\cos\theta}$.
Varying $\theta$, results in varying $d_{1r}$ which can take values in  $[d_{r}-d_1, d_r+d_1]$.
From Lemma \ref{lemma:mean_std_KL_divergence}, we can see that the $\theta$ that has a maximum impact on the KL divergence is the one that would minimize $m_{KL}$ and $\sigma_{KL}$.
This would occur when we maximize $\sigma_{\Delta m}^{2} = \sigma_{\text{SH}}^2 e^{-d_r/\beta_{\text{SH}}}\frac{(1-e^{-(z-z_{l})})^2}{1-e^{-2z}}$ where 
$z = d_{1r}/\beta_{\text{SH}}$ and $z_{l} = (d_r-d_1)/\beta_{\text{SH}}$.
We wish to maximize $h(z) = \frac{(1-e^{-(z-z_{l})})^2}{1-e^{-2z}}$.
Taking it's derivative gives us
\begin{align*}
\frac{\mathrm{d}}{\mathrm{d} z}h(z) = \frac{2(1-e^{-(z-z_l)})}{(1-e^{-2z})^2}(e^{-(z-z_l)}-e^{-2z}).
\end{align*}
Then $\frac{\mathrm{d}}{\mathrm{d} z}h(z)> 0$ if $z>-z_l$, which is true as long as $d_r > d_1$.

Thus, maximizing $\sigma_{\Delta m}^{2}$ occurs at $\theta = \pi$ where $d_{1r}$ takes its maximum value of $d_1+d_r$.
Setting $\theta = \pi$ gives us 
\begin{align*}
\sigma_{\Delta m}^{2} = \sigma_{\text{SH}}^2 \frac{\left(e^{-d_r/\beta_{\text{SH}}} - e^{-(2d_1 + d_r)/\beta_{\text{SH}}}\right)^2}{1 - e^{-2(d_1+d_r)/\beta_{\text{SH}}}}.
\end{align*}

From Lemma \ref{lemma:mean_std_KL_divergence}, we can see that satisfying the KL divergence parameters implies that 
$\frac{\sigma_{\Delta m}^2}{\hat{\sigma}^2} \leq 1-e^{-2\epsilon_{m}}$,
and 
$\frac{\sigma_{\Delta m}^2}{\hat{\sigma}^2} \leq \sqrt{2} \epsilon_{\sigma}$.
Let $\epsilon_d = \min\left\{1-e^{-2\epsilon_{m}}, \sqrt{2}\epsilon_{\sigma}\right\}$.
Thus, we obtain the constraint
\begin{align*}
\frac{e^{-2d_r/\beta_{\text{SH}}}(1-\rho^2)^2}{(1 - \rho^2e^{-2d_r/\beta_{\text{SH}}})(1-\rho^2)} \leq \epsilon_d,
\end{align*}
which in turn gives us the constraint 
\begin{align*}
d_{r} \geq \frac{\beta_{\text{SH}}}{2} \log_e\left(\rho^2 + \frac{1 - \rho^{2}}{\epsilon_d}\right).
\end{align*}

\end{proof}

\subsection{Proof of Lemma \ref{lemma:curv_constraint}}\label{appendix:proof_curv_constraint}
\begin{proof}
Consider the scenario of Fig. \ref{fig:3_points_analysis_with_curvature}b where 
$d_{1} = \Delta d$.
We will choose the location of the general point ($\Gamma_{\text{SH},r}$), which lies within the shaded region, such that it maximizes the impact (in terms of the KL divergence) on the approximation.
From Lemma \ref{lemma:mean_std_KL_divergence}, we can see that the point that has a maximum impact on the KL divergence is the one that would maximize $\sigma_{\Delta m}^{2}$.
From the proof of Lemma \ref{lemma:ball_radius}, we know that for a fixed $d_r$ and varying $\theta$, the maximum value of $\sigma_{\Delta m}^2$ occurs at the maximum value of $d_{1r}$.
This occurs at the boundary of the shaded region, i.e., at a point on the circular arc.
Since this holds for all $d_1<d_r\leq d_{\text{th}}$, we know that the point that maximizes $\sigma_{\Delta m}^{2}$ lies on the circular path with constant curvature $\kappa$.

We thus consider the setting in Fig. \ref{fig:3_points_analysis_with_curvature}c with a fixed curvature $\kappa$.
From the geometry of the figure, we have the following relations:
$d_1 = 2R_c \sin\left(\frac{\Delta \phi}{2}\right)$, $d_{1r} = 2R_c \sin\left(\frac{\phi}{2}\right)$ and $d_{r} = 2R_c \sin\left(\frac{\phi + \Delta \phi}{2}\right)$.
Since $d_1 = \Delta d$, we have $\Delta \phi = 2\sin^{-1}(\kappa \Delta d/2)$.
From Lemma \ref{lemma:dth_loop_free}, we have the constraint that $\kappa < 1/d_{\text{th}}$.
This guarantees that the path will leave the ball.
Moreover, from the geometry of the figure, we can see that this will occur at the angle $\phi$ such that $d_{r} = 2R_c \sin\left(\frac{\phi + \Delta \phi}{2}\right) = d_{\text{th}}$.
This occurs at $\phi = h_{\text{cons}}(\kappa) =  2\sin^{-1}(\frac{\kappa d_{\text{th}}}{2})-\Delta \phi$.

From Lemma \ref{lemma:mean_std_KL_divergence}, we can see that satisfying the KL divergence parameters implies that 
$\frac{\sigma_{\Delta m}^2}{\hat{\sigma}^2} \leq 1-e^{-2\epsilon_{m}}$,
and 
$\frac{\sigma_{\Delta m}^2}{\hat{\sigma}^2} \leq \sqrt{2} \epsilon_{\sigma}$.
Let $\epsilon_d = \min\left\{1-e^{-2\epsilon_{m}}, \sqrt{2}\epsilon_{\sigma}\right\}$.
Thus, the point on the path that maximizes the KL divergence occurs at the angle 
\begin{align*}
\arg\max_{0<\phi\leq h_{\text{cons}}(\kappa)} h_{\text{opt}}(\kappa, \phi),
\end{align*}
where 
\begin{align*}
h_{\text{opt}}(\kappa, \phi) & = \frac{\sigma_{\Delta m}^2}{\hat{\sigma}^2}\\
& = \frac{\left( e^{-\frac{2}{\kappa\beta_{\text{SH}}}\sin(\frac{\phi + \Delta\phi}{2})} -  \rho e^{-\frac{2}{\kappa\beta_{\text{SH}}}\sin(\frac{\phi}{2})} \right)^2}
{(1 - e^{-\frac{4}{\kappa\beta_{\text{SH}}}\sin(\frac{\phi}{2})})(1-\rho^2)}.
\end{align*}
We wish to find the maximum curvature $\kappa$, such that this maximum impact still satisfies the KL divergence parameters, i.e., 
\begin{align*}
\max_{0<\phi\leq h_{\text{cons}}(\kappa)} h_{\text{opt}}(\kappa, \phi) \leq \epsilon_d.
\end{align*}
This results in the optimization problem stated in the lemma.
\end{proof}

\bibliographystyle{spbasic}      
\bibliography{ref}{}

\end{document}